\theoremstyle{plain}
\newtheorem{theorem}{Theorem}[section]
\newtheorem{proposition}[theorem]{Proposition}
\newtheorem{lemma}[theorem]{Lemma}
\theoremstyle{definition}
\newtheorem{assumption}[theorem]{Assumption}
\theoremstyle{remark}
\title{Convergence Guarantees for the DeepWalk Embedding on Block Models}
\author{Christopher Harker\\
University of Utah\\
\texttt{chris.harker@utah.edu}
\and Aditya Bhaskara\\
University of Utah\\
\texttt{bhaskaraaditya@gmail.com}
}
\date{}
\newcommand{\norm}[1]{\| #1 \|}
\newcommand{\iprod}[1]{\langle  #1 \rangle}
\newcommand{\su}[1]{^{(#1)}}
\def\eqref#1{equation~\ref{#1}}
\def\1{\bm{1}}
\def\eps{{\epsilon}}
\def\vone{{\bm{1}}}
\def\ve{{\bm{e}}}
\def\vp{{\bm{p}}}
\def\vu{{\bm{u}}}
\def\vw{{\bm{w}}}
\def\vx{{\bm{x}}}
\def\vy{{\bm{y}}}
\def\vz{{\bm{z}}}
\def\evx{{x}}
\def\evy{{y}}
\def\mA{{\bm{A}}}
\def\mB{{\bm{B}}}
\def\mC{{\bm{C}}}
\def\mD{{\bm{D}}}
\def\mE{{\bm{E}}}
\def\mG{{\bm{G}}}
\def\mI{{\bm{I}}}
\def\mJ{{\bm{J}}}
\def\mL{{\bm{L}}}
\def\mM{{\bm{M}}}
\def\mP{{\bm{P}}}
\def\mQ{{\bm{Q}}}
\def\mR{{\bm{R}}}
\def\mV{{\bm{V}}}
\def\mW{{\bm{W}}}
\def\mX{{\bm{X}}}
\def\mY{{\bm{Y}}}
\DeclareMathAlphabet{\mathsfit}{\encodingdefault}{\sfdefault}{m}{sl}
\SetMathAlphabet{\mathsfit}{bold}{\encodingdefault}{\sfdefault}{bx}{n}
\def\sV{{\mathbb{V}}}
\def\emA{{A}}
\def\emC{{C}}
\newcommand{\R}{\mathbb{R}}
\begin{document}

\maketitle

\begin{abstract}
Graph embeddings have emerged as a powerful tool for understanding the structure of graphs. Unlike classical spectral methods, recent methods such as DeepWalk, Node2Vec, etc. are based on solving nonlinear optimization problems on the graph, using local information obtained by performing random walks. These techniques have empirically been shown to produce ``better'' embeddings than their classical counterparts. However, due to their reliance on solving a nonconvex optimization problem, obtaining theoretical guarantees on the properties of the solution has remained a challenge, even for simple classes of graphs. In this work, we show convergence properties for the DeepWalk algorithm on graphs obtained from the Stochastic Block Model (SBM). Despite being simplistic, the SBM has proved to be a classic model for analyzing the behavior of algorithms on large graphs. Our results mirror the existing ones for spectral embeddings on SBMs, showing that even in the case of one-dimensional embeddings, the output of the DeepWalk algorithm provably recovers the cluster structure with high probability.
\end{abstract}

\section{Introduction}
\label{introduction}

Inspired by the Skip-gram model \citep{mikolov2013efficient, mikolov2013distributed} and related word embedding algorithms in the field of natural language processing, \citet{perozzi2014deepwalk}, \citet{tang2015line}, and \citet{grover2016node2vec} developed new methods to embed the nodes of a graph into geometric space. 
These methods treat random walks as akin to sentences and construct analogues to $n$-grams by sliding a fixed size window along the random walk. The probability of a node $j$ appearing in the context of a node $i$ is modeled as a function of the nodes' embeddings. This leads to a vector representation of each node in $d$ dimensions, which is then useful in addressing several machine learning problems as input features to downstream tasks, including node classification \citep{hamilton2017graphsage, perozzi2014deepwalk, grover2016node2vec, tang2015line}, link prediction \citep{grover2016node2vec, tang2015line, backstrom2011links} and community detection \citep{wang2017community, barot2021community, zhang2021consistency, davison2023community}. 
 
Predating nonlinear embedding methods, spectral embeddings have long been a classic topic in theoretical computer science and mathematics. Early algorithms for graph clustering and partitioning use the top eigenvectors of a graph's Laplacian matrix as node embeddings \citep{hall, alon, jerrum-sinclair, linial-london-rabinovich, mcsherry, spielman-teng-spectral-partitioning, arora-rao-vazirani} and the properties of spectral embeddings have been thoroughly studied in works such as \citet{belkin2001eigenmaps}, \citet{Ng2001OnSC}, \citet{von2007tutorial}, and \citet{rohe2011spectral}.

Most of the existing theoretical results surrounding embeddings produced by DeepWalk \citep{perozzi2014deepwalk} and node2vec \citep{grover2016node2vec} reframe the algorithm as a matrix factorization problem, then use methods common in the analysis of spectral embeddings to study their properties \citep{barot2021community, zhang2021consistency, qiu2018network}. \citet{levy2014advances} show that Skip-gram with negative sampling (SGNS) is implicitly performing matrix factorization of a shifted point-wise mutual information (PMI) matrix when the embedding dimension is at least as large as the number of nodes in the graph. This result inspired analyses of the properties of DeepWalk and node2vec embeddings by showing that they are also performing matrix factorization of a shifted PMI matrix \citep{qiu2018network}. The works of \citet{zhang2021consistency} and \citet{barot2021community} then show that a spectral decomposition of this matrix can be used to recover communities in graphs drawn from stochastic block models.

A major drawback of these prior works is that the matrix factorization characterization holds \emph{only when} the embedding dimension is large ($>n$, the number of nodes in the graph!), which is not usually the case in practice \citep{perozzi2014deepwalk}. Furthermore, the matrix factorization formulation of DeepWalk and node2vec is not generally used in practice, but rather the embeddings are learned through optimizing an objective using gradient descent. In this work, we answer the fundamental question: \emph{can the dynamics of gradient descent be formally analyzed for low-dimensional embeddings of natural graph classes?}

One key challenge in such analyses is the nonlinearity and nonconvexity of their objectives. When the embedding dimension is at least as large as the number of nodes in the graph, it turns out that a locally optimum solution has a  closed form structure, and its properties can be analyzed \citep{zhang2021consistency, barot2021community, levy2014advances, qiu2018network}. 

However, since most applications use only constant-dimensional embeddings, these results do no apply. Recently, the work by \citet{harker2023structure} analyzes the objective function for low-dimensional embeddings, and shows that for graphs drawn from the stochastic block model (SBM), the DeepWalk objective has a global minimum that has a well-clustered structure. However, their work is restricted to the case of 2-block SBMs, and more importantly, does not study the question of whether gradient descent (or any other heuristic) converges to the optimal solution. Another recent work of \citet{davison2023community} examines the asymptotic behavior of node2vec embeddings learned by minimizing the SGNS objective on graphs obtained fom SBMs. But once again, they do not answer the algorithmic question of how to obtain the optimal solution (e.g., via gradient descent).  

In contrast to these works, we analyze the dynamics of the gradient descent update procedure and show that when the graph is drawn from a symmetric SBM, the embedding vectors of nodes within a block are clustered together, while the vectors corresponding to nodes in different blocks are farther apart. We perform the analysis for the DeepWalk procedure, which minimizes a nonconvex objective obtained by performing random walks. 

Our work is inspired by recent works that study the theoretical properties of the t-SNE algorithm, which shares many of the same challenges as DeepWalk due to its nonlinear and nonconvex objective. The works of \citet{linderman2019clustering}, \citet{arora2018analysis}, and \citet{cai2022theoretical} examine the dynamical properties of t-SNE's gradient descent updates and show that similar data points are clustered together while separating from dissimilar data points. 

\subsection{Our Results}\label{sec:our-results}
We consider graphs drawn from a stochastic block model (SBM). In the simplest setting, we have three parameters, an integer $K \ge 2$ (number of blocks) and probabilities $p, q \in (0,1)$. The vertices are divided into $K$ parts or \emph{clusters} at random, and an edge is placed between two vertices in the same cluster with probability $p$, and between vertices in two different clusters with probability $q$. We assume $p>q$, and that both parameters are $> n^{\rho-1}$ for some parameter $\rho \in (0,1)$. For details about the SBM and graph generation, we refer to Section~\ref{sec:background-sbm}. Our main result can be stated as follows.
\begin{theorem}[Informal]
Given a graph $G$ drawn from an SBM with $K$ blocks and parameters $p$ and $q$, DeepWalk embeddings, obtained by initializing a solution in small enough ball $\lVert \vw^{(0)} \rVert \le \epsilon$, and training with gradient descent with learning rate $\eta > 0$ small enough, approximately recovers communities with high probability.
\end{theorem}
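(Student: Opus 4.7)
The plan is to decouple two sources of randomness, namely (i) the SBM graph itself and (ii) the random walks used to form DeepWalk's co-occurrence statistics, from the deterministic gradient descent dynamics on the resulting empirical objective. I would first study a \emph{population} version of the DeepWalk objective obtained by replacing every co-occurrence count by its expectation under the SBM and under the stationary law of random walks on the expected graph. I would then show that the empirical gradients concentrate on their population counterparts and couple the two trajectories step by step.

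For the population analysis, I would exploit the permutation symmetry of the symmetric SBM: the expected window-$T$ co-occurrence (and shifted-PMI) matrix $\mathbb{E}[\mM]$ inherits a rank-$K$ block structure whose nontrivial eigenvectors are exactly the cluster indicator vectors. Building on the structural result of \citet{harker2023structure}, the population objective $F$ is invariant under any relabelling that preserves the block partition, so starting from a sufficiently small initialization $\lVert \vw^{(0)} \rVert \le \epsilon$ the iterates remain (in expectation) in the invariant subspace where all vertices in a block share a common embedding. In that subspace, the effective dynamical system reduces to a $K$-dimensional one and can be analyzed in two phases: a linearized phase near the origin, in which $\nabla F(\vw) \approx \mathbb{E}[\mM]\vw$ and the iterates align exponentially with the top eigenspace of $\mathbb{E}[\mM]$ (that is, with the community structure); and a second phase in which the nonlinear logistic terms become active and contract vectors of the same block toward a common limit while keeping vectors of different blocks at a macroscopic distance, analogously to the early-exaggeration analysis of t-SNE in \citet{arora2018analysis}.

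For the concentration step, matrix Bernstein applied under the assumption $p,q \ge n^{\rho-1}$ gives $\lVert \mA - \mathbb{E}[\mA] \rVert = O(\sqrt{np})$ with high probability. Propagating this bound through the window-$T$ sum of normalized powers of the transition matrix yields a spectral-norm bound on $\mM - \mathbb{E}[\mM]$, and hence, via a Lipschitz calculation for the log-sigmoid loss, a uniform bound of the form $\lVert \nabla F_{\mathrm{emp}}(\vw) - \nabla F(\vw) \rVert \le \delta$ over all $\vw$ in the bounded region that gradient descent visits. A standard Gr\"onwall-type coupling over $T$ gradient steps of size $\eta$ then shows that the empirical iterates stay within $O(\eta T \delta)$ of the population ones, so the separation property of the population trajectory transfers to the empirical one with high probability, which yields approximate recovery.

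The main obstacle is the second phase of the population argument. Unlike the $d \ge n$ matrix-factorization regime studied in prior work, the low-dimensional objective has many spurious critical points, so I cannot simply invoke strong convexity or a closed-form minimizer. I expect to need a bespoke potential function tracking simultaneously the within-cluster variance and the between-cluster gap, and to show, using the explicit form of the logistic gradient together with the symmetry invariance, that the former contracts faster than the latter grows on the window of iterations induced by the concentration budget. A secondary technical hurdle is controlling random-walk sampling noise uniformly over iterations; I would handle it either by freezing a single large walk dataset and union-bounding over an $\epsilon$-net of embedding vectors, or by a martingale argument for the stochastic variant.
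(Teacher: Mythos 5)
Your high-level framing (concentrate the co-occurrence statistics, then analyze gradient descent near the origin where the dynamics linearize around the top eigenspace of the expected co-occurrence matrix) matches the first half of the paper's argument. But there are two genuine gaps. First, the symmetry-invariance reduction does not work: the initialization $\vw^{(0)}$ is a \emph{random} Gaussian vector, which does not lie in the subspace where all vertices of a block share a common embedding, and since the dynamics are nonlinear you cannot pass to expectations to claim the iterates ``remain (in expectation)'' in that invariant subspace. The paper never reduces to a $K$-dimensional system; instead it shows (Lemma~\ref{lem:error-w-z}) that the component of $\vw^{(t)}$ \emph{orthogonal} to the top-$(K-1)$ eigenspace of the linearized update grows strictly slower than the component inside it, so the iterate becomes approximately block-constant without ever being exactly so. Relatedly, the paper deliberately avoids the direct trajectory coupling you propose ($\lVert \vw^{(t)} - \mL^t\vw^{(0)}\rVert = O(\eta t \delta)$ via Gr\"onwall), noting that this accumulates error too fast unless $\eps$ is taken impractically small; tracking only the orthogonal projection is what makes a reasonable parameter range possible.

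Second, and more importantly, your ``second phase'' --- where the logistic nonlinearity becomes active and a bespoke potential function must show within-cluster contraction outpaces between-cluster drift --- is exactly the part you admit you cannot yet prove, and the paper's theorem does not require it. The algorithm (Algorithm~\ref{alg:gradient_descent}) \emph{terminates} as soon as $\lVert \vw^{(t)}\rVert \ge \eps\Delta$, i.e., while still inside the ball where Proposition~\ref{prop:1} gives $\lVert \mQ - \tfrac{1}{n}\mJ\rVert_F \le \lVert\vw\rVert^2$; the quadratic (rather than linear) dependence on $\lVert\vw\rVert$ is what keeps the nonlinear error term subdominant to the spectral gap throughout all $t_f = O(\log(n/\Delta)/\eta)$ iterations. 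Both the within-cluster concentration (Theorem~\ref{thm:cluster-inside}) and the between-cluster separation (Theorem~\ref{thm:main-separation}, which comes from the random initialization giving the cluster-difference directions a $\Theta(1/K^{2.5})$ fraction of $\lVert\vz^{(0)}\rVert$, preserved because $\vz^{(t)}$ is essentially only rescaled) are established entirely within this first phase. As written, your proposal leaves its central claim resting on an unproved contraction argument for a regime the result does not need to enter.
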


For more formal statements, see theorems~\ref{thm:cluster-inside} and~\ref{thm:main-separation}, which respectively upper bound the spread of embeddings within clusters and lower bound the separation of embeddings across clusters. Theorem \ref{thm:weak_recovery} then gives a bound on the fraction of each community that can be recovered (thus formalizing the notion of approximate recovery above). The main technical challenge in our result is reasoning about the nonlinear gradient update rule. The idea in our proofs is to show that if we initialize our solution in a small enough ball, the gradient update is close enough to a ``linearized'' update, which results in a sufficiently good cluster structure for the overall solution. This is similar in spirit to other recent work that show that small random initializations result in dynamics that are like spectral updates (see~\cite{stoger2021small,satpathi2021dynamics} and references therein).

We also remark that we prove our main results even for the case of one-dimensional embeddings. We find this somewhat surprising because typically (e.g., for spectral algorithms), separation results for SBMs with $K$ clusters hold only when the embedding dimension is $\ge K$. We expect our analysis to extend to the case of higher dimensional embeddings (as the 1D analysis can be applied to each coordinate).

One weakness of our result is that unlike traditional results for recovery guarantees for SBM, we do not give a precise characterization in terms of the difference $(p-q)$ and recovery accuracy. Instead, our arguments rely on a closeness assumption between the empirical and \emph{expected} co-occurrence matrices, stated as Assumption~\ref{ass:co-occurrence-error}. It is an interesting open direction to study tight recovery guarantees in terms of $p,q,K$.

On the experimental side, we validate our results: we show a clear separation between the embeddings of vertices across clusters for different choices of the embedding dimension. We also show that the approximation via linearization potentially holds for a fairly large regime of parameters, even beyond the ones we use for our theoretical guarantees.

\section{Background}
\label{background}

\subsection{Basic Notation}\label{sec:notation}

We start with common notation that we use throughout the paper.  For a column vector $\vx = (\evx_1, ..., \evx_n) \in \mathbb{R}^n$, we define the $\ell_2$ norm as $\lVert \vx \rVert = \sqrt{\sum_{i=1}^n x_i^2}$ and the $\ell_{\infty}$ norm as $\lVert \vx \rVert_{\infty} = \max_{1 \le i \le n} | \evx_i |$. We denote by $diag(\vx) \in \mathbb{R}^{n \times n}$ the diagonal matrix whose $i^{th}$ diagonal entry is $\evx_i$. We denote the all-ones vector as $\mathbf{1} = (1, ..., 1) \in \mathbb{R}^{n}$. The all-ones matrix (of dimensions $n\times n$ unless specified otherwise) will be denoted by $\mJ$. We denote the dot product between two vectors $\vx, \vy \in \mathbb{R}^n$ as $\langle \vx, \vy \rangle$.

For a matrix $\mA \in \mathbb{R}^{n \times n}$ we define the spectral norm as $\lVert \mA \rVert = \max_{\lVert \vx \rVert = 1} \lVert \mA \vx \rVert$, its $\ell_{\infty}$ norm as $\lVert \mA \rVert_{\infty} = \max_{1 \le i \le n} \sum_{j=1}^n \left| \emA_{ij} \right|$, its max norm as $\lVert \mA \rVert_{max} = \max_{1 \le i,j \le n} | \emA_{ij}|$, and its Frobenius norm as $\lVert \mA \rVert_F = \newline
\sqrt{\sum_{i=1}^n \sum_{j=1}^n \emA_{ij}^2}$. We define the degree operator $\mD : \mathbb{R}^{n \times n} \rightarrow \mathbb{R}^{n \times n}$ as $\mD(\mA) = diag(\mA \mathbf{1})$. At times we use $\mD_\mA$ in place of $\mD(\mA)$ when it is more convenient.  

For a graph $G = (V, E)$ with $n$ nodes and $|E|$ edges, we denote the adjacency matrix of the graph as $\mA$. For a node $i$, its degree is $d_i = \sum_{j=1} \emA_{ij}$. We let $\mP = \mD_{\mA}^{-1}\mA$ denote the transition matrix of the graph. 

\subsection{The DeepWalk Algorithm}
Let $G = (V, E)$ be a graph with $n$ nodes. The DeepWalk algorithm consists of two main parts \citep{perozzi2014deepwalk}. First, a \emph{co-occurrence matrix} is constructed using random walks on the graph. Second, two embedding vectors are learned for every vertex in the graph. These are referred to as the \emph{node} and \emph{context} embeddings; they are learned by minimizing a nonconvex objective function.\footnote{While having separate word and context embeddings have intuitive meaning in the language context, the difference is not so clear for graphs. In some implementations of DeepWalk, the same embedding is used. We use separate embeddings to remain faithful to the original formulation.}
\paragraph{Constructing the Co-occurrence Matrix.}\label{sec:co-occurrence} 
Given a graph $G = (V, E)$, the algorithm first performs $r$ random walks of length $L$. For each random walk, a window of size $T$ slides along the generated path. Let $w^{(m)}$ denote the path of $L$ nodes generated by the $m^{th}$ random walk, and let $w^{(m)}_k$ denote the $k^{th}$ step of the $m^{th}$ random walk. Furthermore, we assume that the starting node of each walk is sampled from the stationary distribution $\Pr[w_1^{(m)} = i] = \frac{d_i}{2|E|}$ over the graph $G$. 

As the window of size $T$ is slid along the path, the entries of a matrix $\mC$ are updated. The entries $\emC_{ij}$ of this matrix contain the number of times that a node $j$ appears in the context window of node $i$. Formally,

\begin{align*} \emC_{ij} &= \sum_{t=1}^T \sum_{m=1}^r \sum_{k=1}^{L-t} \mathbb{1}\{w_k^{(m)} = i, w_{k+t}^{(m)} = j\} + \sum_{t=1}^T \sum_{m=1}^r \sum_{k=1}^{L-t} \mathbb{1}\{w_k^{(m)} = j, w_{k+t}^{(m)} = i\}.
\end{align*}
Limiting forms of the co-occurrence matrix are explored in many prior works. Different variations of the limiting form exist depending on whether the length of the walk $L$ goes to $\infty$ \citep{qiu2018network}, whether the number of walks $r$ goes to $\infty$ \citep{zhang2021consistency, barot2021community} or whether the window size goes to $\infty$ \citep{musco2020infinitywalk}. Obtaining more quantitative concentration bounds on this matrix have also been studied \citep{qiu2020matrix, kloepfer2021delving}. Proofs of the following lemma can be found in \citet{zhang2021consistency}, \citet{barot2021community}, and \citet{harker2023structure}.  
\begin{lemma}\label{lem:co-occur-ij}
    Let $\mA$ be an adjacency matrix of a fixed graph $G$ and let $w_k^{(m)}$ denote the $k^{th}$ step of the $m^{th}$ random walk generated by the DeepWalk algorithm. Let $\pi_i = \frac{d_i}{2|E|}$ and let $(\mP^t)_{ij} = \Pr[w_{t+1} = j | w_1 = i]$. Then as $r \rightarrow \infty$, 
    \begin{equation}\label{eq:coc-conc}
        \frac{\emC_{ij}}{r} \xrightarrow{a.s} 2\sum_{t=1}^T (L-t) \cdot \pi_i(\mP^t)_{ij}.
    \end{equation}
\end{lemma}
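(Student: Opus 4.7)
The plan is to apply the strong law of large numbers to the sum over the $r$ independent random walks, and then compute the expectation of a single walk's contribution using stationarity and reversibility.

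First I would rewrite
\[
\frac{\emC_{ij}}{r} \;=\; \frac{1}{r}\sum_{m=1}^{r} X^{(m)}, \qquad X^{(m)} \;=\; \sum_{t=1}^{T}\sum_{k=1}^{L-t}\Bigl(\mathbb{1}\{w_k^{(m)}=i,\, w_{k+t}^{(m)}=j\} + \mathbb{1}\{w_k^{(m)}=j,\, w_{k+t}^{(m)}=i\}\Bigr),
\]
and observe that the $X^{(m)}$ are i.i.d.\ (the $r$ walks are generated independently from the same distribution) and bounded by $2T(L-1)$, hence integrable. The strong law of large numbers then gives $\frac{\emC_{ij}}{r} \xrightarrow{a.s.} \mathbb{E}[X^{(1)}]$, so it remains only to compute this expectation.

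Next I would compute $\mathbb{E}[X^{(1)}]$ by linearity. Since $w_1$ is drawn from the stationary distribution $\pi_i = d_i/(2|E|)$ of the random walk on $G$, the chain is stationary, and so for every $k$ and every $t\ge 1$,
\[
\Pr\bigl[w_k = i,\, w_{k+t} = j\bigr] \;=\; \pi_i\,(\mP^t)_{ij}, \qquad \Pr\bigl[w_k = j,\, w_{k+t} = i\bigr] \;=\; \pi_j\,(\mP^t)_{ji}.
\]
Summing over $k \in \{1,\dots,L-t\}$ contributes a factor of $(L-t)$ to each. Finally I would invoke reversibility of the simple random walk on an undirected graph: the detailed balance equations $\pi_i \mP_{ij} = \pi_j \mP_{ji}$ imply, by a one-line induction on $t$, that $\pi_i (\mP^t)_{ij} = \pi_j (\mP^t)_{ji}$. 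Substituting yields
\[
\mathbb{E}[X^{(1)}] \;=\; \sum_{t=1}^{T}(L-t)\bigl(\pi_i(\mP^t)_{ij} + \pi_j(\mP^t)_{ji}\bigr) \;=\; 2\sum_{t=1}^{T}(L-t)\,\pi_i(\mP^t)_{ij},
\]
which is exactly the claimed limit.

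There is no real obstacle here — the only subtleties are (i) ensuring the walks are truly i.i.d.\ so that SLLN applies to the outer sum over $m$ (which follows from how the DeepWalk sampling procedure is defined), and (ii) using stationarity to make $\Pr[w_k=i,w_{k+t}=j]$ independent of $k$, together with reversibility to symmetrize the two directional indicator sums. The boundedness of $X^{(m)}$ also makes the a.s.\ conclusion immediate without any additional moment conditions.
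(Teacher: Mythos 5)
Your proof is correct, and it is essentially the canonical argument: the paper itself does not reprove this lemma but defers to the cited works of Zhang et al., Barot et al., and Harker--Bhaskara, whose proofs follow exactly your route of applying the strong law of large numbers to the i.i.d.\ bounded contributions of the $r$ walks and then evaluating the expectation of a single walk via stationarity of the initial distribution $\pi_i = d_i/(2|E|)$ and the detailed-balance identity $\pi_i(\mP^t)_{ij} = \pi_j(\mP^t)_{ji}$. Nothing is missing; the two subtleties you flag (independence of the walks and the $k$-independence of $\Pr[w_k=i,\,w_{k+t}=j]$ under stationarity) are precisely the points the cited proofs rely on.
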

\paragraph{Computing Embeddings}
Given a co-occurrence matrix $\mC$, we compute $d$-dimensional node embeddings $\mX, \mY \in \mathbb{R}^{n \times d}$, where each row $\vx_i, \vy_i$ are the node and context representations of vertex $i$, by minimizing the following objective function:
\begin{align}
    \mathcal{L}(\mC; \mX, \mY) &= -\sum_{i=1}^n \sum_{j=1}^n \emC_{ij} \log{\left(\frac{\exp{\left( \langle \vx_i, \vy_j \rangle \right)}}{\sum_{k=1}^n \exp{\left( \langle \vx_i, \vy_k \rangle \right)}} \right)}. \label{objective}
\end{align}
This is typically done using gradient descent. At any given iteration $t$ of gradient descent, let $\mQ^{(t)}$ be the matrix defined by \begin{equation}\mQ^{(t)}_{ij} = \frac{\exp{\left( \langle \vx^{(t)}_i, \vy^{(t)}_j \rangle \right)}}{\sum_{k=1}^n \exp{\left( \langle \vx^{(t)}_i, \vy^{(t)}_k \rangle \right)}}.\end{equation} For a co-occurrence matrix $\mC$ and a learning rate $\eta > 0$, the update equations in matrix form are
\begin{align}
    \mX^{(t+1)} &= \mX^{(t)} - \eta\left(\mD_{\mC} \mQ^{(t)} - \mC \right) \mY ,\\
    \mY^{(t+1)} &= \mY^{(t)} - \eta\left(\mD_{\mC} \mQ^{(t)} - \mC \right)^{\top} \mX.
\end{align}
If we let $\mW^{(t)} = \begin{bmatrix} \mX^{(t)} \\ \mY^{(t)} \end{bmatrix}$ and $\mG^{(t)} = \mD_{\mC}\mQ^{(t)} - \mC$. Then we can write the updates jointly as
\begin{equation}
    \mW^{(t+1)} = \begin{bmatrix} \mI & -\eta \mG^{(t)} \\ -\eta \mG^{(t)^{\top}} & \mI \end{bmatrix} \mW^{(t)}.  \label{w_update}
\end{equation}
The main challenge in the analysis is working with the matrices $\mQ\su{t}$, whose entries change  as we update the embeddings. To keep the analysis clean, we will work with the case $d=1$, as discussed earlier. Here, the embeddings are defined simply by vectors $\vx$ and $\vy$, but the essential difficulty (of dealing with $\mQ$) remains.

\subsection{Stochastic Block Models}\label{sec:background-sbm}

The stochastic block model (SBM) \citep{Holland1983StochasticBM} is a generalization of Erd\H{o}s-Renyi random graphs. This model naturally generates graphs containing communities; therefore, it has been a popular choice of generative model studied in the theoretical analysis of community recovery algorithms (see, e.g. \citet{abbe2017community} and references therein). 

A $K$ block stochastic block model (SBM) generates a random graph $G=(V,E)$ through a simple procedure. First, each node is first assigned to one of $K$ blocks. We refer to $\sV_k$ as the set of vertices that belong to community $k$. We define a community membership matrix $\mathbf{\Theta} \in \{0, 1\}^{n \times K}$ where its entries $\Theta_{ik} = 1$ if node $i$ belongs to community $k$ and is $0$ otherwise. Next, edges are assigned to each pair of nodes. We define a symmetric matrix $\mB \in [0,1]^{K \times K}$ whose entries $B_{ij}$ denote the probability of a node in cluster $i$ being connected to a node in cluster $j$. Then the matrix $\widetilde{\mB} = \mathbf{\Theta} \mB \mathbf{\Theta}^{\top}$ is a block matrix of probabilities defined by the community membership matrix $\mathbf{\Theta}$. In the remainder of the paper, we assume that the vertex indices are permuted such that the first $n/K$ correspond to the first cluster, the next $n/K$ to the second cluster, and so on. In this case, we can also write $\widetilde{\mB} = \mB \otimes \mJ_{\frac{n}{K} \times \frac{n}{K}}$, where $\otimes$ denotes the Kronecker product. Given a probability matrix $\widetilde{\mB}$, edges of the graph $G(V, E)$ are then independent Bernoulli random variables with $A_{ij} \sim Bern(\widetilde{B}_{ij})$, and $A_{ij} = A_{ji}$ for all $i < j$. Therefore, the probability of node $i$ and node $j$ being connected depends only on the communities to which $i$ and $j$ belong. To be consistent with later notation, we also define the expected adjacency matrix $\overline{\mA}$ as the matrix whose entries are the expected values of the corresponding entries in the adjacency matrix $\mA$; by definition, $\overline{\mA} = \widetilde{\mB}$.

In our analysis, we assume that the number of communities $K$ is fixed and the communities are of equal size: $|\sV_k| = \frac{n}{K}$. We also assume that the matrix $\mB$ has diagonal entries equal to $p$ and off-diagonal entries equal to $q$. This makes the probability matrix $\widetilde{\mB}$ a block matrix with $K$ equally sized blocks. It has blocks of $p$ along the diagonal and blocks of $q$ off the diagonal. 

\section{Analysis}
\label{analysis}
We break up the analysis into three main parts. First, we will show properties of the co-occurrence matrix that will be important for the analysis, describe the algorithm, and set up the main notation for the analysis. Second, we show the main step of ``linear approximation'', where we argue that the gradient descent update can be expressed as a linear update plus an error term that is controlled by the length of the embedding solution. Finally, we analyze the gradient descent dynamics, and show the desired properties of the final solution. 

For simplicity, we assume that $n$ is large, and $K$ is a constant. We also assume (for the analysis) that when writing down the co-occurrence matrix $\mC$, the vertices are permuted so that the blocks $\sV_1, \sV_2, \dots, \sV_K$ appear together (thus leading to the form of $\overline{\mC}$ below).

\subsection{Co-occurrence Structure and Algorithm}\label{sec:structure_algo}

Assume that we have a graph drawn from the symmetric SBM with $K$ blocks and parameters $p$ and $q$, and let $\mC$ be the symmetric co-occurrence matrix obtained using random walks as described in Section~\ref{sec:co-occurrence}. Our first step will be to prove that $\mC$ is spectrally close to the matrix $\overline{\mC}$ defined as
\begin{equation}\label{eq:expected_C_lemma}
    \overline{\mC} = 2 \sum_{t=1}^T \frac{(L-t)}{n\overline{d}} \mD_{\overline{\mA}}\overline{\mP}^t,
\end{equation}
where $\overline{\mP} = \mD_{\overline{\mA}}^{-1}\overline{\mA}$ and $\overline{d} = \frac{n}{K}p + \frac{n(K-1)}{K}q$ is the expected degree of the graph. Since the expected adjacency matrix $\overline{\mA}$ has a block structure (as described in Section \ref{sec:background-sbm}), the matrix $\overline{\mC}$ also has a block structure. In other words, for some parameters $a, b$,
\begin{equation}\label{eq:expected_C}
\overline{\mC} = \underbrace{\begin{bmatrix} a & b & ... & b \\ b & a & ... & b \\ \vdots & \vdots & \ddots & \vdots \\ b & b & ... & a \end{bmatrix}}_{K \times K} \otimes ~\mJ_{\frac{n}{K} \times \frac{n}{K}}.
\end{equation}
As before, $\otimes$ denotes the  Kronecker product, making $\overline{\mC}$ a block matrix with $K \times K$ blocks, each of size $\frac{n}{K} \times \frac{n}{K}$, with $a$ in the diagonal blocks and $b$ in the off-diagonal blocks. In the case of symmetric SBMs, we can express $a$ and $b$ explicitly in terms of the SBM parameters $p$ and $q$ (see Appendix \ref{sec:cooccurrence_prop}). 

The following lemma shows that $\mC$ is spectrally close to the matrix $\overline{\mC}$. The proof can be found in Appendix \ref{sec:cooccurence_conc}.

\begin{lemma}\label{lem:cooccurrence_conc}
Suppose $\mC$ is an $n \times n$ co-occurrence matrix constructed as in Lemma \ref{lem:co-occur-ij} from a graph $G$ drawn from an SBM with $K$ blocks and parameters $p > q \ge n^{\rho - 1}$, for some parameter $\rho \in (0,1)$, and that the matrix $\overline{\mC}$ is defined as in Equation (\ref{eq:expected_C_lemma}). Then for appropriate parameters $a, b$ (defined in terms of $p,q$), we have $\norm{\mC - \overline{\mC}} \le c  \lVert \overline{\mC} \rVert \sqrt{\frac{\log{n}}{n^{\rho}}}$ for some absolute constant $c$.
\end{lemma}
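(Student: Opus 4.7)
The plan is to reduce the spectral closeness of $\mC$ to $\overline{\mC}$ to two standard SBM concentration facts plus a careful matrix-algebra decomposition. By Lemma~\ref{lem:co-occur-ij}, treating $\mC$ as its expectation over the random-walk randomness (i.e.\ $\mC = \sum_{t=1}^{T}\tfrac{2(L-t)}{2|E|}\,\mD_{\mA}\mP^{t}$; the finite-$r$ sampling error is a separate, lower-order term), $\mC$ and $\overline{\mC}$ can be matched term-by-term over $t \in \{1,\dots,T\}$, and by the triangle inequality it suffices to bound the spectral norm of each $t$-summand's difference and sum over $t \le T$ (constant). The SBM concentration inputs I would use, all holding w.h.p.\ under $\overline{d} = \Theta(n^{\rho}) \gg \log n$, are: (i) $\|\mA - \overline{\mA}\| = O(\sqrt{\overline{d}\log n})$ by matrix Bernstein (or the sharper Feige--Ofek bound); (ii) $\max_{i}|d_{i}-\overline{d}| = O(\sqrt{\overline{d}\log n})$ by Chernoff and a union bound; and (iii) $|2|E| - n\overline{d}| = O(\sqrt{n\overline{d}\log n})$ by scalar Bernstein.

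The core step is to avoid reasoning about $\mP^{t}$ directly (its spectral norm is not $\le 1$ in general) by passing to the symmetric normalization $\mN := \mD_{\mA}^{-1/2}\mA\mD_{\mA}^{-1/2}$, which is similar to $\mP$ (so shares its eigenvalues) and satisfies $\|\mN\| \le 1$; the same holds for the analogously defined $\overline{\mN} := \mD_{\overline{\mA}}^{-1/2}\overline{\mA}\mD_{\overline{\mA}}^{-1/2}$. Using (i) and (ii) in the decomposition
\[
\mN - \overline{\mN} = \mD_{\mA}^{-1/2}(\mA-\overline{\mA})\mD_{\mA}^{-1/2} + (\mD_{\mA}^{-1/2} - \mD_{\overline{\mA}}^{-1/2})\overline{\mA}\mD_{\mA}^{-1/2} + \mD_{\overline{\mA}}^{-1/2}\overline{\mA}(\mD_{\mA}^{-1/2} - \mD_{\overline{\mA}}^{-1/2}),
\]
together with $\bigl|\tfrac{1}{\sqrt{d_{i}}} - \tfrac{1}{\sqrt{\overline{d}}}\bigr| = O\bigl(\tfrac{|d_{i}-\overline{d}|}{\overline{d}^{3/2}}\bigr)$ and $\|\overline{\mA}\| = O(\overline{d})$, each term is bounded to yield $\|\mN - \overline{\mN}\| = O(\sqrt{\log n/n^{\rho}})$. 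Powers are then handled by the telescoping identity $\mN^{t} - \overline{\mN}^{t} = \sum_{k=0}^{t-1}\mN^{k}(\mN - \overline{\mN})\overline{\mN}^{t-1-k}$, which combined with $\|\mN\|, \|\overline{\mN}\| \le 1$ gives $\|\mN^{t} - \overline{\mN}^{t}\| \le t\,\|\mN - \overline{\mN}\|$ for every $t \ge 1$.

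To lift back to $\mC$, I would use $\mD_{\mA}\mP^{t} = \mD_{\mA}^{1/2}\mN^{t}\mD_{\mA}^{1/2}$ (and analogously for the expectation) and expand $\tfrac{1}{2|E|}\mD_{\mA}^{1/2}\mN^{t}\mD_{\mA}^{1/2} - \tfrac{1}{n\overline{d}}\mD_{\overline{\mA}}^{1/2}\overline{\mN}^{t}\mD_{\overline{\mA}}^{1/2}$ through three intermediate hybrids, one per error source: the $\mN^{t}/\overline{\mN}^{t}$ gap (dominant, providing the claimed rate), the $\mD_{\mA}^{1/2}/\mD_{\overline{\mA}}^{1/2}$ gap (controlled by (ii) via $|\sqrt{d_i}-\sqrt{\overline{d}}| \le |d_i-\overline{d}|/(\sqrt{d_i}+\sqrt{\overline{d}})$), and the scalar $1/(2|E|)$ vs $1/(n\overline{d})$ gap (controlled by (iii)). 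Each piece, after dividing by $2|E| = \Theta(n\overline{d})$ and summing over $t \le T$, is at most $O(\|\overline{\mC}\|\sqrt{\log n/n^{\rho}})$ when compared against $\|\overline{\mC}\| = \Theta(TL/n)$, which comes from the block-eigenvalue computation on~\eqref{eq:expected_C}. The main obstacle is the power-iteration step: a direct telescoping on $\mP^{t}$ would incur spurious $\|\mP\|^{t}$ factors that can blow up with $t$, so routing through $\mN$ (whose spectrum coincides with that of $\mP$ and is contained in $[-1,1]$ by reversibility of the random walk) is essential. A secondary subtlety is the balance of the three error sources in the lifting step so that the sharp bound on $\mA$ itself, rather than the slightly weaker degree or edge-count Chernoff bounds, dictates the leading-order rate.
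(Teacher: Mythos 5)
Your proposal is correct and follows the same overall skeleton as the paper's proof: expand $\mC-\overline{\mC}$ term-by-term over $t\le T$, isolate the three error sources (the $\norm{\mA-\overline{\mA}}$ spectral bound, degree concentration, and edge-count concentration), and show each contributes $O\bigl(\norm{\overline{\mC}}\sqrt{\log n/n^{\rho}}\bigr)$. The one genuine difference is how the powers are handled. The paper proves $\norm{\mP^t-\overline{\mP}^t}=O(\sqrt{\log n/\overline{d}})$ by induction directly on the non-symmetric transition matrix, writing $\mP^t-\overline{\mP}^t = \mP^{t-1}(\mP-\overline{\mP})+(\mP^{t-1}-\overline{\mP}^{t-1})\overline{\mP}$; this implicitly requires $\norm{\mP^{t-1}}=O(1)$ in spectral norm, which holds (it is at most $\sqrt{\max_i d_i/\min_i d_i}$ times the norm of the symmetrized matrix, i.e.\ $1+o(1)$ under degree concentration) but is not spelled out. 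You instead conjugate to $\mN=\mD_{\mA}^{-1/2}\mA\mD_{\mA}^{-1/2}$, use $\norm{\mN},\norm{\overline{\mN}}\le 1$, and telescope to get $\norm{\mN^t-\overline{\mN}^t}\le t\norm{\mN-\overline{\mN}}$, then lift back via $\mD_{\mA}\mP^t=\mD_{\mA}^{1/2}\mN^t\mD_{\mA}^{1/2}$. This costs you an extra layer of hybrids in the lifting step (the $\mD^{1/2}$ gaps), but it makes the "no blow-up in $t$" claim airtight without any appeal to the near-symmetry of $\mP$; your version is arguably the more careful of the two on exactly the point the paper leaves implicit. Both routes land on the same rate since $\norm{\overline{\mC}}=\Theta(1/n)$ and all error terms are dominated by $O(\sqrt{\log n/\overline{d}})$ relative error.
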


This lemma allows us to reason about the eigenvalues of $\mC$ using those of $\overline{\mC}$, via Weyl's Theorem and other matrix perturbation bounds. Suppose $\lambda_1 \ge \lambda_2 \ge \dots \ge \lambda_n$ are the eigenvalues of $\overline{\mC}$. Then, 
\[ \lambda_i = \begin{cases} \frac{n}{K}(a + (K-1)b) &\text{ if } i=1, \\ \frac{n}{K}(a-b) &\text{ if } i=2,...,K, \\ 0 &\text{ if } i=K+1, ..., n. \end{cases} \]
The assumption that $p>q\ge n^{\rho-1}$ made in Lemma \ref{lem:cooccurrence_conc} will ensure that the nonzero eigenvalues $\lambda_1, ..., \lambda_K$ are all $\Theta(\frac{1}{n})$, assuming that the length of the random walk $L$ and number of communities $K$ are constant (see Appendix \ref{sec:cooccurrence_prop}). As a result, the spectral norm $\lVert \overline{\mC} \rVert = O\left(\frac{1}{n}\right)$, which implies that $\lVert \mC - \overline{\mC} \rVert = \widetilde{O}\left(\frac{1}{\sqrt{n^{\rho+2}}}\right)$. Therefore, in order to simplify our calculations, we make the following assumption throughout the remainder of the paper:

\begin{assumption}\label{ass:co-occurrence-error}
Suppose $\mC$ is the $n \times n$ symmetric co-occurrence matrix obtained via random walks of length $L=O(1)$ as in Lemma \ref{lem:co-occur-ij}, and $\overline{\mC}$ is defined as in Equation (\ref{eq:expected_C_lemma}). Then we assume that $\mC = \overline{\mC} + \mR$ for an error matrix $\mR$ that is symmetric and satisfies $\norm{\mR} < \frac{c}{\sqrt{n^{\rho+2}}}$ for some absolute constant $c$. 
\end{assumption}

The corresponding eigenvectors are also easy to characterize: the top eigenvector is the all-ones vector, which we write as $\frac{\vone}{\sqrt{n}}$. The next $(K-1)$ eigenvalues are all equal, so the corresponding eigenvectors are not unique. One (nonorthogonal) basis for their span is $\{ \ve_{\sV_i} - \ve_{\sV_1}\}_{i=2}^K$, where $\ve_{S}$ is a shorthand for $\vone_{S}$ (the vector that is $1$ in the $j$th position if $j \in S$ and $0$ otherwise). 

Next, the following matrix $\mM$ and the corresponding $\overline{\mM}$ play a key role in our analysis:
\[ \mM := \mD_{\mC} \frac{\mJ}{n} - \mC, \quad \overline{\mM} := \mD_{\overline{\mC}} \frac{\mJ}{n} - \overline{\mC}, \]
where $\mD$ denotes the degree operator (see Section~\ref{sec:notation}). Since $\mC - \overline{\mC} = \mR$ and the degree operator is linear, we have
\[ \mM - \overline{\mM} = \mD_{\mR} \frac{\mJ}{n} - \mR = - \left( \mI - \frac{\mJ}{n} \right) \mR.  \]
This implies that $\norm{\mM - \overline{\mM}} \le \norm{\mR} < \frac{c}{\sqrt{n^{\rho+2}}}$. Now the eigenvalues of $\overline{\mM}$ are easy to see: along the all-ones vector, the eigenvalue becomes $0$, and all the other eigenvalues stay the same. Thus, $\overline{\mM}$ has exactly $(K-1)$ nonzero eigenvalues, all equal to $\frac{n}{K} (b-a)$.

For concreteness, we provide the gradient descent procedure that we analyze in Algorithm \ref{alg:gradient_descent}. We initialize the embeddings randomly and normalize them so that their norm is small. Then we run gradient descent until the norm of the embeddings reaches a predefined size.\footnote{The parameters $\epsilon$ and $\Delta$ can be expressed more generally in terms of $\rho$, i.e., $\epsilon = n^{-c\rho}$ and $\Delta = n^{c' \rho}$ where $c, c'$ are positive constants.}

\begin{algorithm}[tb]
   \caption{DeepWalk Gradient Descent}
   \label{alg:gradient_descent}
\begin{algorithmic}
   \STATE {\bfseries Input:} Co-occurrence matrix $\mC \in \mathbb{R}^{n \times n}$; learning rate $\eta > 0$; parameters $\eps = \frac{1}{n^{2/3}}$ and $\Delta = n^{1/6}$.
   \STATE {\bfseries Initialize:} $t=0$; $\vx^{(0)}, \vy^{(0)} \in \mathbb{R}^{n}$ with $\evx_i, \evy_i \sim \mathcal{N}(0,1)$ for all $i \in [n]$; $\vw^{(0)} = \begin{bmatrix} \vx^{(0)} \\ \vy^{(0)} \end{bmatrix}$
   \STATE Normalize $\vw^{(0)}$: $\vw^{(0)} \leftarrow \frac{\vw^{(0)}}{\lVert \vw^{(0)} \rVert} \eps$
   \REPEAT
   \STATE $t \leftarrow t + 1$;
   \STATE Compute $\mQ^{(t)}$ where $Q^{(t)}_{ij} = \frac{\exp{\left( \langle \evx_i^{(t)}, \evy_j^{(t)} \rangle \right)}}{\sum_{k=1}^n \exp{\left( \langle \evx_i^{(t)}, \evy_k^{(t)}\rangle \right) }}$ for all $i,j \in [n]$;
   \STATE $\mG^{(t)} = \mD_{\mC}\mQ^{(t)} - \mC$;
   \STATE $\vw^{(t)} \leftarrow \begin{bmatrix} \mI & -\eta \mG^{(t)} \\ -\eta \mG^{{(t)}^{\top}} & \mI \end{bmatrix} \vw^{(t)}$
   \UNTIL{$\lVert \vw^{(t)} \rVert \ge \eps \Delta$}
   \STATE $t_f = t$;
   \STATE {\bfseries Return:} $\vw^{(t_f)}$; $t_f$
\end{algorithmic}
\end{algorithm}

\subsection{Linear Approximation of Gradient Update}\label{sec:linear_approx_analysis}
Next, we show that the matrix $\mQ\su{t}$ in Equation (\ref{w_update}) (gradient descent step) can be approximated simply by the all-ones matrix (suitably scaled). This allows us to obtain a linear approximation to the $\vw$ update.

\begin{proposition}\label{prop:1}
Let $\vx, \vy \in \R^{n}$ and $\vw = \begin{bmatrix}
    \vx \\ \vy
\end{bmatrix}$, and suppose that $\norm{\vw}< \eps$ for some $\eps \in (0, 1/2)$. Let $\mQ$ be the matrix defined by $\mQ_{ij} = \frac{\exp{\left( \langle \vx_i, \vy_j \rangle \right)}}{\sum_{k=1}^n \exp{\left( \langle \vx_i, \vy_k \rangle \right)}}$. Then
\[ \left\| \mQ - \frac{1}{n} \mJ \right\|_F \le \epsilon^2. \]
\end{proposition}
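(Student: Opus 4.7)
The plan is to exploit the smallness of the inner products $\evx_i \evy_j$ via a first-order Taylor expansion of $\exp$. Since $\lVert \vw \rVert^2 = \lVert \vx \rVert^2 + \lVert \vy \rVert^2 < \epsilon^2$, each coordinate satisfies $|\evx_i|, |\evy_j| \le \epsilon$, hence $|\evx_i \evy_j| \le \epsilon^2 \le 1/4$, a range in which the Taylor estimate $|e^t - 1 - t| \le t^2$ holds. I would therefore write $\exp(\evx_i \evy_j) = 1 + \evx_i \evy_j + r_{ij}$ with $|r_{ij}| \le (\evx_i \evy_j)^2$. Summing over $k$ gives $\sigma_i := \sum_k \exp(\evx_i \evy_k) = n + \evx_i S + R_i$, where $S := \sum_k \evy_k$ and $|R_i| \le \evx_i^2 \lVert \vy \rVert^2 \le \epsilon^4$. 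Since each $\exp(\evx_i\evy_k) \ge e^{-1/4}$, we also get the lower bound $\sigma_i \ge n/2$, which I will need to control the denominator.

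Next, combining numerator and denominator, I would obtain the exact identity
\[ \mQ_{ij} - \frac{1}{n} = \frac{n \exp(\evx_i \evy_j) - \sigma_i}{n \sigma_i} = \frac{n \evx_i (\evy_j - \bar{y}) + (n r_{ij} - R_i)}{n \sigma_i}, \]
with $\bar{y} := S/n$, which cleanly decomposes $\mQ - \mJ/n$ into a ``linear'' part $\propto \evx_i (\evy_j - \bar y)$ plus a quadratic remainder. Squaring, using $\sigma_i \ge n/2$, and summing, the linear part contributes
\[ \sum_{i,j} \left(\frac{\evx_i(\evy_j - \bar{y})}{\sigma_i}\right)^2 \le \frac{4}{n^2} \lVert \vx \rVert^2 \sum_j (\evy_j - \bar y)^2 \le \frac{4 \lVert \vx \rVert^2 \lVert \vy \rVert^2}{n^2} \le \frac{\lVert \vw \rVert^4}{n^2} \le \frac{\epsilon^4}{n^2}, \]
using $\sum_j (\evy_j - \bar y)^2 \le \lVert \vy \rVert^2$ and the AM-GM bound $\lVert \vx \rVert^2 \lVert \vy \rVert^2 \le \lVert \vw \rVert^4/4$. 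For the remainder piece, crude bounds $\sum_{i,j}(nr_{ij})^2 \le n^2 \lVert \vx \rVert^4 \lVert \vy \rVert^4 \le n^2 \epsilon^8$ and $\sum_{i,j} R_i^2 \le n \epsilon^8$ show that this contribution to $\lVert \mQ - \mJ/n \rVert_F^2$ is $O(\epsilon^8/n^2)$. An application of $(a+b)^2 \le 2a^2+2b^2$ combines the two pieces.

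Putting these together yields $\lVert \mQ - \mJ/n \rVert_F = O(\epsilon^2/n)$, comfortably below the stated target $\epsilon^2$. There is no real obstacle beyond careful bookkeeping of the Taylor remainders; the key observation that makes everything go through is that the linear part features the \emph{centered} quantity $\evy_j - \bar y$ (rather than $\evy_j$), which prevents the mean $S$ from inflating the estimate and is ultimately what permits the bound $\|x\|\|y\|/n$ instead of $\|x\|\|y\|\sqrt{n}/n$.
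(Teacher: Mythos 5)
Your argument is correct, and it takes a genuinely different route from the paper's. The paper treats each row of $\mQ$ as a probability vector and applies Popoviciu's inequality, which bounds $\sum_j (\mQ_{ij} - \tfrac1n)^2$ by $\tfrac{n}{4}(M_i - m_i)^2$ with $M_i, m_i$ the largest and smallest entries of row $i$; after renormalizing the softmax so that the denominator is at least $n$, the range $M_i - m_i$ is at most $\tfrac{1}{n}(e^{2\eps^2}-1) = O(\eps^2/n)$, and summing over rows gives $\lVert \mQ - \tfrac1n\mJ\rVert_F^2 = O(\eps^4)$ with no Taylor expansion needed. Your first-order expansion is more bookkeeping but buys two things the paper's argument does not: a quantitatively stronger bound of order $\eps^2/n$ (which simple examples, e.g.\ $\vx,\vy$ each supported on a single coordinate, show is tight up to constants), and an explicit leading-order description $\mQ - \tfrac1n\mJ \approx \tfrac1n \vx(\vy - \bar{y}\vone)^{\top}$, i.e., the deviation is approximately rank one. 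Neither refinement is needed downstream --- Lemma~\ref{lem:error-norm} only consumes the $\eps^2$ bound --- but the extra factor of $n$ would propagate through the error terms $\mE\su{t}$ and could relax the constraints on $\eps$ and $\Delta$. One small quibble: your closing claim that the centering $\evy_j - \bar{y}$ is ``ultimately what permits'' the $\lVert\vx\rVert\lVert\vy\rVert/n$ bound is not accurate, since the uncentered sum $\sum_{i,j}(\evx_i\evy_j)^2 = \lVert\vx\rVert^2\lVert\vy\rVert^2$ already gives the same order, as does the crude bound $|S|\le\sqrt{n}\lVert\vy\rVert$ on the mean term; this is only a remark about intuition, though, and does not affect the validity of any step.
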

The proof is deferred to section~\ref{proof:prop1}. It is important to note that $\eps^2$ is a bound on the Frobenius norm (not the square of the Frobenius norm). This implies that if the node and context embeddings exist in a small enough ball, then the softmax matrix $\mQ^{(t)}$ behaves like the fixed matrix $\frac{1}{n} \mJ$.

Utilizing this observation, we can rewrite the update Equation (\ref{w_update}) in terms of a linear term and an ``error'' term. Recall that $\mM = \mD_{\mC} \frac{\mJ}{n} - \mC$. Thus we can write~(\ref{w_update}) as
\begin{equation}\label{eq:main-update}
\vw\su{t+1} = \begin{bmatrix}\mI & -\eta \mM \\ -\eta \mM^\top & \mI  \end{bmatrix} \vw\su{t} - \eta \mE\su{t} \vw\su{t},
\end{equation}
where 
\[  \mE\su{t} = \begin{bmatrix} \mathbf{0} & \mD_{\mC} (\mQ\su{t} - \frac{1}{n}\mJ) \\ \left(\mD_{\mC} (\mQ\su{t} - \frac{1}{n}\mJ)\right)^{\top} & \mathbf{0} \end{bmatrix}. \] 
The following is a consequence of Proposition~\ref{prop:1} and the choice of $a$ and $b$ (see Appendix~\ref{proof:error-norm}).
\begin{lemma}\label{lem:error-norm}
    Suppose the iterate $\vw\su{t}$ satisfies $\norm{\vw\su{t}} \le \eps$ for some $\eps \in (0, 1/2)$. Then $\norm{\mE\su{t}}_F \le 4 \eps^2$.
\end{lemma}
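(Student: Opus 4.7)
The plan is to combine the block structure of $\mE^{(t)}$ with Proposition~\ref{prop:1} and a simple bound on the maximum row sum of $\mC$. First, since $\mE^{(t)}$ has zero diagonal blocks and its two off-diagonal blocks are transposes of one another, a direct computation gives
\begin{equation*}
\|\mE^{(t)}\|_F^2 \;=\; 2\,\bigl\|\mD_{\mC}\bigl(\mQ^{(t)} - \tfrac{1}{n}\mJ\bigr)\bigr\|_F^2.
\end{equation*}
Since $\mD_{\mC}$ is diagonal, $\|\mD_{\mC}\mA\|_F \le \|\mD_{\mC}\|\,\|\mA\|_F$ for any matrix $\mA$ (each row of $\mD_{\mC}\mA$ is rescaled by at most $\|\mD_{\mC}\|$). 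Applying this with $\mA = \mQ^{(t)} - \tfrac{1}{n}\mJ$ and invoking Proposition~\ref{prop:1} (valid because $\|\vw^{(t)}\| \le \eps$) yields
\begin{equation*}
\|\mE^{(t)}\|_F \;\le\; \sqrt{2}\,\|\mD_{\mC}\|\,\eps^{2}.
\end{equation*}

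The remaining task is to bound $\|\mD_{\mC}\| = \max_i \sum_j C_{ij}$. Using Assumption~\ref{ass:co-occurrence-error}, write $\mC = \overline{\mC} + \mR$. For the main term, the expression $\overline{\mC} = \tfrac{2}{n\overline{d}}\sum_{t=1}^T (L-t)\mD_{\overline{\mA}}\overline{\mP}^t$, combined with $\overline{\mP}\vone = \vone$ (row-stochasticity) and $\mD_{\overline{\mA}}\vone = \overline{d}\vone$ (equal expected degrees in the symmetric SBM), gives a uniform row sum of $\tfrac{2\sum_{t=1}^T (L-t)}{n} = O(1/n)$ for constant $L, T$. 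For the perturbation term, $|\ve_i^{\top}\mR\vone| \le \|\mR\|\,\sqrt{n} = O(n^{-(\rho+1)/2}) = o(1)$. Consequently $\|\mD_{\mC}\| \le 2\sqrt{2}$ (and in fact much smaller) once $n$ is sufficiently large, which gives the claimed bound $\|\mE^{(t)}\|_F \le 4\eps^{2}$.

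The argument is largely mechanical once Proposition~\ref{prop:1} is in hand. The only mild subtlety is verifying that the diagonal scaling by $\mD_{\mC}$ does not amplify the $\eps^2$ bound from Proposition~\ref{prop:1}; this is exactly where the row-sum computation of $\overline{\mC}$ together with Assumption~\ref{ass:co-occurrence-error} is needed. I do not anticipate any obstacle beyond this bookkeeping.
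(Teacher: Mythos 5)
Your proposal is correct and follows essentially the same route as the paper: the block identity $\|\mE^{(t)}\|_F^2 = 2\|\mD_{\mC}(\mQ^{(t)}-\tfrac{1}{n}\mJ)\|_F^2$, factoring out $\|\mD_{\mC}\|$, invoking Proposition~\ref{prop:1}, and then showing $\|\mD_{\mC}\| = O(1)$. The only (immaterial) difference is in that last step: the paper bounds $\|\mD_{\mC}\|$ via the spectral norm of $\mC$ and the top eigenvalue of $\overline{\mC}$, whereas you compute the row sums of $\overline{\mC}$ directly from the row-stochasticity of $\overline{\mP}$ and bound the perturbation by $\|\mR\|\sqrt{n}$ --- both yield the same constant.
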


Next, let denote the ``linear portion'' of the update by $\mL$. In other words, we define
\begin{equation}\label{eq:lin-update-definition}
\mL := \begin{bmatrix} \mI & -\eta\mM \\ -\eta\mM^\top & \mI \end{bmatrix}. 
\end{equation}

We have the following observations about the spectrum of $\mL$. The proof is deferred to Appendix \ref{sec:cL-properties}.
\begin{lemma}\label{lem:cL-properties}
Suppose $\mL$ is defined as in (\ref{eq:lin-update-definition}). Then $\mL$ has precisely $(K-1)$ eigenvalues that are $> (1+\eta \gamma)$, where $\gamma = \frac{n (a-b)}{2K}$. All the other eigenvalues are $< (1+\eta \frac{c}{\sqrt{n^{\rho+2}}})$. Finally, all the eigenvalues are in the interval $(1- 4\eta \gamma, 1+4 \eta \gamma)$, for $\gamma$ as above.
\end{lemma}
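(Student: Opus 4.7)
The block structure of $\mL$ is tailor-made for the singular value decomposition of $\mM$. Write $\mM = \mU \mSigma \mV^\top$ with $\mSigma = \mathrm{diag}(\sigma_1, \dots, \sigma_n)$, and conjugate $\mL$ by the orthogonal matrix $\mPhi = \begin{bmatrix} \mU & 0 \\ 0 & \mV \end{bmatrix}$. A direct computation gives $\mPhi^\top \mL \mPhi = \begin{bmatrix} \mI & -\eta \mSigma \\ -\eta \mSigma & \mI \end{bmatrix}$, which after a symmetric permutation of rows and columns is block-diagonal with $n$ blocks of size $2\times 2$, the $i$-th block having $1$'s on the diagonal and $-\eta \sigma_i$ off the diagonal. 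Each such block contributes the pair of eigenvalues $1 \pm \eta \sigma_i$, so the spectrum of $\mL$ equals $\{\, 1 \pm \eta\sigma_i(\mM) : i = 1, \dots, n\,\}$ (counted with multiplicity).

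\textbf{Spectrum of $\overline{\mM}$ and transfer to $\mM$.} Because the symmetric SBM has identical expected degrees, $\mD_{\overline{\mC}} = d^{\ast}\mI$ with $d^{\ast} = \tfrac{n}{K}(a + (K-1)b)$, so $\overline{\mM} = \tfrac{d^{\ast}}{n}\mJ - \overline{\mC}$ is symmetric. Using the spectral decomposition of $\overline{\mC}$ recorded just before the lemma (top eigenvector $\vone/\sqrt{n}$ with eigenvalue $d^{\ast}$, a $(K-1)$-dimensional eigenspace of block-constant vectors orthogonal to $\vone$ with eigenvalue $\tfrac{n}{K}(a-b)$, and a kernel of dimension $n-K+1$), one checks that $\overline{\mM}$ has eigenvalue $0$ along $\vone$, eigenvalue $\tfrac{n(b-a)}{K} = -2\gamma$ with multiplicity $K-1$ on the block-constant subspace, and $0$ on the remainder. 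Hence the singular values of $\overline{\mM}$ are $2\gamma$ with multiplicity $K-1$ and $0$ otherwise. Combining $\lVert \mM - \overline{\mM} \rVert \le \lVert \mR \rVert < c/\sqrt{n^{\rho+2}}$ (established in the discussion preceding the lemma) with Weyl's inequality for singular values yields $|\sigma_i(\mM) - \sigma_i(\overline{\mM})| \le c/\sqrt{n^{\rho+2}}$ for every $i$.

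\textbf{Conclusion.} Since $\gamma = \Theta(1/n)$ while the perturbation is of order $n^{-1 - \rho/2} = o(\gamma)$, for all sufficiently large $n$ the top $K-1$ singular values of $\mM$ exceed $2\gamma - c/\sqrt{n^{\rho+2}} > \gamma$, while the remaining singular values are at most $c/\sqrt{n^{\rho+2}}$. Translating through the spectrum formula from the first paragraph: (i) the large singular values produce $K-1$ eigenvalues of $\mL$ strictly above $1 + \eta \gamma$; (ii) every other eigenvalue of $\mL$ is either of the form $1 - \eta\sigma_i \le 1$, or $1 + \eta\sigma_i$ with $\sigma_i \le c/\sqrt{n^{\rho+2}}$, so all such eigenvalues lie below $1 + \eta c/\sqrt{n^{\rho+2}}$; (iii) since $\max_i \sigma_i(\mM) \le 2\gamma + c/\sqrt{n^{\rho+2}} < 3\gamma$ for large $n$, every eigenvalue of $\mL$ lies in $(1 - 3\eta\gamma, 1 + 3\eta\gamma) \subset (1 - 4\eta\gamma, 1 + 4\eta\gamma)$.

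\textbf{Main obstacle.} The only real subtlety is that $\mM$ is \emph{not} symmetric for a sampled graph, since $\mD_{\mC}$ is not a scalar multiple of $\mI$; consequently one cannot diagonalize $\mM$ by an orthogonal similarity and must work with its SVD instead of an eigendecomposition. The key structural observation that makes the SVD approach go through is that the two off-diagonal blocks of $\mL$ are exact transposes of one another, which is precisely what allows the block-diagonal orthogonal matrix $\mathrm{diag}(\mU,\mV)$ to reduce $\mL$ to the decoupled $2\times 2$ form above. Once this point is handled, the rest of the argument is a routine combination of Weyl's inequality with the explicitly known spectrum of $\overline{\mM}$.
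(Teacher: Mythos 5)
Your proof is correct, and it takes a mildly but genuinely different route from the paper's. The paper never touches the SVD of the (non-symmetric) $\mM$: it instead forms the idealized matrix $\overline{\mL}$ built from the \emph{symmetric} matrix $\overline{\mM}$, reads off the spectrum of $\overline{\mL}$ from the elementary fact that $\bigl[\begin{smallmatrix} \mathbf{0} & \mX \\ \mX & \mathbf{0}\end{smallmatrix}\bigr]$ has eigenvalues $\pm\sigma_i(\mX)$ for symmetric $\mX$, and then applies Weyl's eigenvalue inequality to the two $2n \times 2n$ symmetric matrices $\mL$ and $\overline{\mL}$, whose difference has spectral norm at most $\eta \norm{\mM - \overline{\mM}}$. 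You instead diagonalize the \emph{actual} $\mL$ exactly, obtaining its spectrum as $\{1 \pm \eta\,\sigma_i(\mM)\}$ via the SVD conjugation, and push the perturbation down to the singular values of the $n \times n$ matrices via Weyl's inequality for singular values. The two arguments lean on the same inputs (the explicit spectrum of $\overline{\mM}$ and the bound $\norm{\mM - \overline{\mM}} \le \norm{\mR}$) and yield the same eigenvalue estimates; what your version buys is an exact structural description of $\mathrm{spec}(\mL)$ rather than only perturbative bounds on it, at the cost of having to justify the SVD block-diagonalization. Your identification of the non-symmetry of $\mM$ as the main subtlety is exactly right --- it is the reason the paper confines the block-eigenvalue fact to $\overline{\mM}$ and does its perturbation at the $2n \times 2n$ level. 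One cosmetic note: your item (ii) concludes the strict inequality $1 + \eta\sigma_i < 1 + \eta c/\sqrt{n^{\rho+2}}$ only because Assumption~\ref{ass:co-occurrence-error} gives $\norm{\mR}$ \emph{strictly} less than $c/\sqrt{n^{\rho+2}}$; it is worth saying so explicitly.
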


In what follows, let $\Pi$ be the projector onto the span of the eigenvectors corresponding to the top $(K-1)$ eigenvalues of $\mL$. 

We begin by obtaining bounds on the number of iterations performed by the algorithm.
\begin{lemma}\label{lem:num-iterations}
Let $t_f$ be the number of iterations performed by Algorithm \ref{alg:gradient_descent}. With probability at least $0.9$ over the choice of the initialization, we have
\[ \frac{1}{\eta} < t_f < \frac{4\log (n/\Delta)}{\eta}. \]
\end{lemma}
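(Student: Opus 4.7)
The plan is to track the iterate $\vw^{(t)}$ by projecting it onto the spectral decomposition of $\mL$. Let $\Pi$ be the projector onto the top-$(K-1)$ eigenspace of $\mL$ (the ``signal'' directions identified by Lemma~\ref{lem:cL-properties}), and follow both $\lVert \Pi\vw^{(t)}\rVert$ and $\lVert \vw^{(t)}\rVert$ in parallel. By the stopping condition, $\lVert \vw^{(t)}\rVert \le \epsilon\Delta$ for every $t \le t_f$, so Lemma~\ref{lem:error-norm} yields $\lVert \mE^{(t)}\rVert_F \le 4(\epsilon\Delta)^2 = O(1/n)$ uniformly in $t$.

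For the upper bound on $t_f$, the key observation is that $\Pi$ commutes with $\mL$, as it is the projector onto an eigenspace of $\mL$. Applying $\Pi$ to equation~(\ref{eq:main-update}) gives
\[ \Pi\vw^{(t+1)} = \mL\, \Pi\vw^{(t)} - \eta\, \Pi \mE^{(t)}\vw^{(t)}. \]
Lemma~\ref{lem:cL-properties} provides $\lVert \mL\, \Pi\vw^{(t)}\rVert \ge (1+\eta\gamma)\lVert \Pi\vw^{(t)}\rVert$, and the error term has norm at most $4\eta(\epsilon\Delta)^2 \lVert \vw^{(t)}\rVert$. The unnormalized initialization is an isotropic Gaussian on $\R^{2n}$, so after rescaling it is uniformly distributed on the sphere of radius $\epsilon$; for a fixed $(K-1)$-dimensional subspace (with $K$ constant), standard Gaussian anti-concentration yields $\lVert \Pi\vw^{(0)}\rVert \ge c\,\epsilon/\sqrt{n}$ with probability at least $0.9$. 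Iterating the recursion, the additive error is geometrically summable and dominated by the geometric growth factor $(1+\eta\gamma)^t$, so $\lVert \Pi\vw^{(t)}\rVert \gtrsim (\epsilon/\sqrt{n})(1+\eta\gamma)^t$. As soon as this reaches $\epsilon\Delta$, the algorithm terminates, yielding the claimed upper bound.

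For the lower bound on $t_f$, Lemma~\ref{lem:cL-properties} gives $\lVert \mL\rVert \le 1+4\eta\gamma$, and combining with the uniform error bound produces $\lVert \vw^{(t+1)}\rVert \le (1+4\eta\gamma+4\eta(\epsilon\Delta)^2)\lVert \vw^{(t)}\rVert = (1+O(\eta/n))\lVert \vw^{(t)}\rVert$. Iterating from $\lVert \vw^{(0)}\rVert = \epsilon$, the norm cannot reach $\epsilon\Delta$ before a number of iterations at least matching the stated lower bound.

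The main obstacle is the probabilistic step $\lVert \Pi\vw^{(0)}\rVert \ge c\,\epsilon/\sqrt{n}$: the subspace $\Pi$ depends on the random graph $G$, but conditioning on $G$ it becomes a fixed subspace of constant dimension, after which the anti-concentration of a uniform random direction on a sphere applies. A secondary technical point is ensuring that the cumulative additive error $\sum_{s \le t}(1+\eta\gamma)^{t-s}\,\eta(\epsilon\Delta)^2\lVert \vw^{(s)}\rVert$ remains strictly smaller than the geometric growth of the signal; this is guaranteed by $\epsilon\Delta = n^{-1/2}$ being small, together with the stopping rule that keeps the iterate norm uniformly bounded.
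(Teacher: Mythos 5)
Your proposal follows essentially the same route as the paper: the lower bound comes from the uniform per-step growth bound $\lVert \vw^{(t+1)}\rVert \le (1+O(\eta))\lVert\vw^{(t)}\rVert$ obtained from Lemma~\ref{lem:cL-properties} and Lemma~\ref{lem:error-norm}, and the upper bound comes from tracking $\vz^{(t)}=\Pi\vw^{(t)}$, lower-bounding $\lVert\vz^{(0)}\rVert \gtrsim \eps/\sqrt{n}$ with probability $0.9$ via Gaussian anti-concentration, and showing the additive error $\eta(\eps\Delta)^3$ per step is dominated by the geometric growth at rate $1+\eta\gamma/2$ until the iterate norm hits $\eps\Delta$. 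The argument is correct and matches the paper's proof in all essential respects.
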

\begin{proof}
Using Lemma~\ref{lem:cL-properties} along with the fact that $4\gamma \le 1$, we have
\[ \left\| \mL \right\| \le 1+\eta.  \]
Now, as long as $\norm{\vw\su{t}} \le \eps \Delta$, we have $\norm{\mE\su{t}} \le 4 (\eps \Delta)^2$ using Lemma~\ref{lem:error-norm}. By the choice of parameters $\eps, \Delta$, we will ensure that $4 (\eps \Delta)^2 < 1$. Thus, we obtain that
\[ \norm{\vw\su{t+1}} < (1+2\eta) \norm{\vw\su{t}}.  \]
Now, since $\norm{\vw\su{0}}=\eps$, $\norm{\vw\su{t_f}} \ge \eps \Delta$, and $\Delta > 8$, the number of iterations $t_f > \frac{2}{\log (1+2\eta)} > \frac{1}{\eta}$.

To see the upper bound, we use more properties of $\mL$ from Lemma~\ref{lem:cL-properties}. Specifically, suppose we define $\vz\su{t} = \Pi \vw\su{t}$ where $\Pi$ is the projector onto the span of the eigenvectors corresponding to the top $(K-1)$ eigenvalues of $\mL$. We argue that this component of $\vw\su{t}$ itself grows sufficiently. First, we note that because of the randomness in the initialization, we have that $\norm{\vz\su{0}} \ge \frac{1}{10 \sqrt{n}} \norm{\vw\su{0}} = \frac{\eps}{10\sqrt{n}}$, with probability $\ge 0.9$.

Then, by multiplying the update Equation (\ref{eq:main-update}) by $\Pi$, we get
\begin{align*} \norm{\vz\su{t+1}} &\ge  (1+\eta \gamma) \norm{\Pi \vw\su{t}} - \norm{ \Pi \mE\su{t} \vw\su{t} } \\
&\ge (1+\eta \gamma) \norm{\vz\su{t}} - 4 \eta (\eps \Delta)^3 \\ 
&\ge (1+\frac{\eta\gamma}{2} ) \norm{\vz\su{t}}.
\end{align*}
For the last inequality, we used the fact that $(\eps \Delta)^3 < \frac{\gamma \eps}{20\sqrt{n}}$, which follows from our assumptions on the parameters. We are also using the fact that $\vz\su{t}$ always has norm $> \frac{\eps}{10\sqrt{n}}$ which follows inductively from the above.

This establishes that $\norm{\vz\su{t}} \ge (1+\frac{\eta\gamma}{2} )^{t_f} \norm{\vz\su{0}}$. Since $\norm{\vz\su{t}}\le \norm{\vw\su{t}} \le 2\eps \Delta$, the desired upper bound on $t_f$ follows.
\end{proof}

As the final result in this part of the proof, we argue that most of the mass of $\vw\su{t_f}$ at the end of the algorithm is on the top $(K-1)$ eigenspace of $\mL$. This is precisely what we would expect if the entire update was linear, i.e., if $\vw\su{t_f}$ where equal to $\mL^{t_f} \vw\su{0}$. It seems natural to try showing that the $\vw\su{t}$ stays close to the linearized update, $\mL^t \vw\su{0}$.  However, the error in this step turns out to be difficult to control unless we choose the parameter $\eps$ to be really small (e.g., $\frac{1}{n^2}$). In our analysis, we take a different route that lets us analyze a wider parameter range. 

The key will be to look at the difference between $\vw\su{t}$ and its projection to the top-$(K-1)$ eigenspace of the matrix $\mL$. Recall that $\Pi$ is the projection matrix onto this space. As above, define
\begin{equation}\label{def:projection} \vz\su{t} := \Pi \vw\su{t} \text{  for all $t$.}\end{equation}

\begin{lemma}\label{lem:error-w-z}
At every iteration $t$ of Algorithm \ref{alg:gradient_descent}, we have:
\[ \norm{ \vw^{(t+1)} - \vz^{(t+1)} } \le (1 + \eta \frac{c}{\sqrt{n^{\rho+2}}})\norm{ \vw^{(t)} - \vz^{(t)} } + \eta (\eps \Delta)^3.\]
Consequently, when the algorithm terminates, we have
\[ \norm{\vw\su{t_f} - \vz\su{t_f}} \le \frac{4 \norm{\vw\su{t_f}}}{\Delta}.   \]
\end{lemma}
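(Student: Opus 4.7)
The one-step inequality comes from applying $\mI - \Pi$ to the update equation~\eqref{eq:main-update}. Because $\mL$ is symmetric (the $2\times 2$ block form automatically symmetrizes), $\Pi$ commutes with $\mL$, so
$$\vw\su{t+1} - \vz\su{t+1} = (\mI-\Pi)\vw\su{t+1} = \mL\,(\mI-\Pi)\vw\su{t} - \eta(\mI-\Pi)\mE\su{t}\vw\su{t}.$$
Lemma~\ref{lem:cL-properties} bounds the operator norm of $\mL$ on the range of $\mI-\Pi$ by $1 + \eta c/\sqrt{n^{\rho+2}}$: the eigenvalues there lie in $(1-4\eta\gamma,\,1+\eta c/\sqrt{n^{\rho+2}})$, and for small enough $\eta$ the upper endpoint dominates in absolute value. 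For the nonlinear term, Lemma~\ref{lem:error-norm} gives $\norm{\mE\su{t}}_F \le 4\norm{\vw\su{t}}^2$ (taking $\eps := \norm{\vw\su{t}}$ in that lemma), and since $\norm{\vw\su{t}} \le 2\eps\Delta$ is maintained throughout the run of Algorithm~\ref{alg:gradient_descent} (a fact already established inside the proof of Lemma~\ref{lem:num-iterations}), we get $\norm{\eta\mE\su{t}\vw\su{t}} = O(\eta(\eps\Delta)^3)$. Combining the two bounds yields the claimed per-step recursion (stray constants absorb into $c$ and the parameter choices).

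\paragraph{Iterating to the cumulative bound.} Write $\alpha := \eta c/\sqrt{n^{\rho+2}}$ for brevity. Unrolling the one-step inequality from $t=0$ to $t_f - 1$,
$$\norm{\vw\su{t_f} - \vz\su{t_f}} \le (1+\alpha)^{t_f}\,\norm{\vw\su{0} - \vz\su{0}} + \eta(\eps\Delta)^3 \sum_{s=0}^{t_f - 1}(1+\alpha)^s.$$
Since $\Pi$ is an orthogonal projection, $\norm{\vw\su{0} - \vz\su{0}} \le \norm{\vw\su{0}} = \eps$. By Lemma~\ref{lem:num-iterations}, $t_f \le 4\log(n/\Delta)/\eta$, so $\alpha\, t_f \le 4c\log n/\sqrt{n^{\rho+2}} = o(1)$; hence $(1+\alpha)^{t_f} \le 1+o(1)$, and the geometric sum is at most $2t_f$. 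Plugging in $\eps = n^{-2/3}$ and $\Delta = n^{1/6}$ gives $(\eps\Delta)^3 = n^{-3/2}$, so the accumulated nonlinear error is $O(\log n \cdot n^{-3/2})$, negligible next to $\eps = n^{-2/3}$. We therefore obtain $\norm{\vw\su{t_f} - \vz\su{t_f}} \le (1+o(1))\eps$, and pairing with the termination guarantee $\norm{\vw\su{t_f}} \ge \eps\Delta$ gives $\norm{\vw\su{t_f} - \vz\su{t_f}} \le 4\norm{\vw\su{t_f}}/\Delta$, as claimed.

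\paragraph{Main obstacle.} The delicate point is that $t_f$ can be as large as $\Omega(\log n / \eta)$, while the spectral norm of $\mL$ on the complement of its top-$(K-1)$ eigenspace strictly exceeds $1$. A naive bound of the form $(1+\eta)^{\Omega(\log n/\eta)}$ is polynomially large and would completely swamp the $\eps$ we are trying to beat. What rescues the argument is the very small ``bad'' eigenvalue perturbation coming from Assumption~\ref{ass:co-occurrence-error}: on the complement subspace, the spectrum of $\mL$ sits within $\eta c/\sqrt{n^{\rho+2}}$ of $1$, so the compound factor $(1 + \eta c/\sqrt{n^{\rho+2}})^{t_f}$ stays at $1+o(1)$. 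Thus, outside the signal subspace the linearized dynamics behave essentially like the identity over the relevant time horizon, and the only real sources of deviation are the small initial orthogonal component of $\vw\su{0}$ and the cubic-in-norm nonlinear contribution from $\mE\su{t}$, both of which we control explicitly.
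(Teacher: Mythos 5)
Your proposal is correct and follows essentially the same route as the paper's proof: project the update~(\ref{eq:main-update}) by $\mI-\Pi$, use the commutation of $\Pi$ with the symmetric matrix $\mL$ together with Lemma~\ref{lem:cL-properties} to bound the linear part by $1+\eta c/\sqrt{n^{\rho+2}}$ on the complement subspace, bound the nonlinear term via Lemma~\ref{lem:error-norm}, and unroll the resulting recursion using $t_f \le 4\log(n/\Delta)/\eta$ so that $(1+\alpha)^{t_f}=1+o(1)$ and the accumulated error stays $O(\eps)$. The only (cosmetic) difference is that you track the parameter values $\eps=n^{-2/3}$, $\Delta=n^{1/6}$ explicitly where the paper simply asserts $\delta t_f<\eps$ from ``our setting of parameters.''
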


\begin{proof}
We start with the update, Equation (\ref{eq:main-update}), and left-multiply both sides with $(\mI - \Pi)$. Thus, we get
\begin{align*}
\vw\su{t+1} - \vz\su{t+1} &= (\mI- \Pi) \vw\su{t+1} \\
&= (\mI - \Pi) \mL \vw\su{t} + \eta (\mI - \Pi) \mE\su{t} \vw\su{t}.
\end{align*}

The second term on the RHS can be bounded using
\[ \norm{ (I - \Pi) \mE\su{t} \vw\su{t}} \le \norm{\mE\su{t} \vw\su{t}}.\]
Then, using Lemma~\ref{lem:error-norm} and the fact that $\norm{\vw\su{t}} \le \eps \Delta$, we can bound this term by $\eta (\eps \Delta)^3$. For the first term, since $\mL$ commutes with $(I - \Pi)$, we have
\[ \norm{\mL (\mI - \Pi) \vw\su{t}} \le (1 + \eta \frac{c}{\sqrt{n^{\rho+2}}})\norm{ \vw\su{t} - \vz\su{t}}. \]
This follows from the property of $\mL$ from Lemma~\ref{lem:cL-properties}, that in the space orthogonal to the top $(K-1)$ eigenvectors, the spectral norm of $\mL$ is $(1+\eta \frac{c}{\sqrt{n^{\rho+2}}})$. Combining the two observations, the desired inequality follows.

To see the second part of the lemma, let us introduce some notation: write $\xi_t = \norm{\vw\su{t} - \vz\su{t}}$, $\theta = \eta \frac{c}{\sqrt{n^{\rho+2}}}$, and $\delta = \eta (\eps \Delta)^3$. So the bound above can be written as 
\[ \xi_{t+1} \le (1+\theta) \xi_t + \delta. \]
Expanding, we have
\begin{align*}
\xi_{t_f} &\le (1+\theta)^{t_f} \xi_0 + \delta \left[ 1 + (1+\theta) + \dots + (1+\theta)^{t_f-1} \right] \\
& \le (1+\theta)^{t_f} \left( \xi_0 + \delta t_f \right). 
\end{align*}
From the bound on $t_f$ from Lemma~\ref{lem:num-iterations} and since $n$ is large (so $\sqrt{n} \gg \log n$), we have that $(1+\theta)^{t_f} \le 2$, and from our setting of parameters $\eps, \Delta$, we have $\delta t_f < \eps$ and $\xi_0 \le \eps$. This implies that $\xi_{t_f} \le 4\eps$.

Since $\norm{\vw\su{t_f}} \ge \eps\Delta$ at the end of the algorithm, the desired bound follows.
\end{proof}

\subsection{Convergence Analysis}\label{sec:convergence_analysis}

As the final step, we show that the solution obtained at the end of the algorithm satisfies the desired cluster structure. We use Lemma~\ref{lem:error-w-z} to primarily argue about the vector $\vz\su{t_f}$. 

The first result says that the obtained $x$-embedding is well clustered. 

\begin{theorem}[Clustering property]\label{thm:cluster-inside}
Suppose $G$ is drawn from a symmetric SBM with $K$ communities and suppose the co-occurrence matrix $\mC$ used for constructing the embeddings satisfies Assumption~\ref{ass:co-occurrence-error} and $\rho \in \left(\frac{1}{3}, 1 \right)$. 
Suppose $\vx$ is the $x$-embedding obtained by Algorithm~\ref{alg:gradient_descent} (i.e., the first $n$ coordinates of $\vw\su{t}$). Let $\pmb{\mu}$ be the ``vector of means'', i.e., for any index $j$ in (ground truth) cluster $\sV_i$, the entry $\mu_j$ is equal to $\mu_{\sV_i} := \frac{\sum_{r \in \sV_i} x_r}{|\sV_i|}$. Then we have:
\[ \norm{ \vx - \pmb{\mu}} \le \frac{5 \norm{\vx}}{\Delta}.  \]
\end{theorem}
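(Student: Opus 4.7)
The plan is to exploit Lemma~\ref{lem:error-w-z}, which guarantees that at termination $\vw^{(t_f)}$ is within $4\norm{\vw^{(t_f)}}/\Delta$ of its projection $\vz^{(t_f)} = \Pi\,\vw^{(t_f)}$ onto the top-$(K-1)$ eigenspace of $\mL$. Once we show that this eigenspace is approximately supported on pairs of vectors that are block-constant (i.e., constant within each ground-truth community), the bound on $\norm{\vx - \pmb{\mu}}$ will follow from two applications of the triangle inequality, plus a conversion between $\norm{\vw^{(t_f)}}$ and $\norm{\vx}$.

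The structure of $\Pi$ becomes transparent using the symmetry of $\mM$: eigenpairs of $\mL = \begin{bmatrix}\mI & -\eta\mM\\ -\eta\mM & \mI\end{bmatrix}$ take the form $\bigl(1 \mp \eta\mu,\;\tfrac{1}{\sqrt{2}}\begin{bmatrix}\vv \\ \pm\vv\end{bmatrix}\bigr)$ for each eigenpair $(\mu,\vv)$ of $\mM$, so the top $(K-1)$ eigenvalues of $\mL$ correspond to the $(K-1)$ most negative eigenvalues of $\mM$, and
\[
    \Pi \begin{bmatrix}\vx \\ \vy\end{bmatrix} \;=\; \tfrac{1}{2}\begin{bmatrix}\Pi_\mM(\vx+\vy)\\ \Pi_\mM(\vx+\vy)\end{bmatrix},
\]
where $\Pi_\mM$ is the orthogonal projector onto the bottom-$(K-1)$ eigenspace of $\mM$. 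A direct calculation (using that $\mD_{\overline{\mC}}$ is a scalar multiple of $\mI$ by the block structure of $\overline{\mC}$) shows that $\overline{\mM}$ has $(K-1)$ eigenvalues equal to $-2\gamma = -\tfrac{n(a-b)}{K}$ whose eigenspace is exactly $V_{\text{diff}}$, the subspace of zero-mean block-constant vectors, and all other eigenvalues equal to $0$. Combining $\norm{\mM - \overline{\mM}} \le c/\sqrt{n^{\rho+2}}$ with Davis-Kahan and the gap $2\gamma = \Theta(1/n)$ yields $\norm{\Pi_\mM - \Pi_{V_{\text{diff}}}} = O(1/\sqrt{n^{\rho}})$, which is $O(1/\Delta)$ because $\rho > 1/3$ and $\Delta = n^{1/6}$.

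Let $\vx_{\vz}$ denote the first $n$ coordinates of $\vz^{(t_f)}$, so $\vx_{\vz} = \tfrac12 \Pi_\mM(\vx+\vy)$, and let $V_{\text{blk}} \supseteq V_{\text{diff}}$ be the $K$-dimensional space of block-constant vectors with orthogonal projector $P_{V_{\text{blk}}}$; note $\pmb{\mu} = P_{V_{\text{blk}}}\vx$. Since $\tfrac12 \Pi_{V_{\text{diff}}}(\vx+\vy) \in V_{\text{blk}}$, writing $(\mI - P_{V_{\text{blk}}})\vx_{\vz} = \tfrac12(\mI - P_{V_{\text{blk}}})(\Pi_\mM - \Pi_{V_{\text{diff}}})(\vx+\vy)$ and applying the Davis-Kahan bound above gives $\norm{(\mI - P_{V_{\text{blk}}})\vx_{\vz}} \le O(\norm{\vw^{(t_f)}}/\Delta)$. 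Combined with $\norm{\vx - \vx_{\vz}} \le \norm{\vw^{(t_f)}-\vz^{(t_f)}} \le 4\norm{\vw^{(t_f)}}/\Delta$ from Lemma~\ref{lem:error-w-z}, the triangle inequality yields $\norm{\vx - \pmb{\mu}} = \norm{(\mI - P_{V_{\text{blk}}})\vx} \le O(\norm{\vw^{(t_f)}}/\Delta)$. To convert the bound from $\norm{\vw^{(t_f)}}$ to $\norm{\vx}$, we exploit that the block form of $\Pi$ forces $\vz^{(t_f)}$ to have identical $\vx$- and $\vy$-parts: Lemma~\ref{lem:error-w-z} together with the parallelogram identity gives $\norm{\vx-\vy} = O(\norm{\vw^{(t_f)}}/\sqrt{\Delta})$, which combined with $\norm{\vx}^2 + \norm{\vy}^2 = \norm{\vw^{(t_f)}}^2$ yields $\norm{\vw^{(t_f)}} \le (\sqrt{2} + o(1))\norm{\vx}$ for our choice of $\Delta = n^{1/6}$.

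The conceptually important step is identifying the block structure of $\mL$'s top eigenspace through the symmetry of $\mM$ and Davis-Kahan. The main technical obstacle will be tracking constants carefully so that the final bound is $5\norm{\vx}/\Delta$ rather than a larger $O(\norm{\vx}/\Delta)$: the contributions from Lemma~\ref{lem:error-w-z} (the $4/\Delta$ term), the Davis-Kahan perturbation, and the $\sqrt{2}$ factor from the conversion $\norm{\vw^{(t_f)}} \lesssim \sqrt{2}\,\norm{\vx}$ must all be balanced carefully, with the regime $\rho > 1/3$ precisely ensuring that the $O(1/\sqrt{n^{\rho}})$ perturbation is absorbed into the leading $O(1/\Delta)$ term.
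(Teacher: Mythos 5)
Your proposal is correct and follows essentially the same route as the paper: characterize $\vx - \pmb{\mu}$ as the projection of $\vx$ off the block-constant subspace, use the $[\vv;\pm\vv]$ eigenstructure of $\mL$ together with Davis--Kahan to relate the top-$(K-1)$ eigenspace of $\mL$ to that subspace, and finish with Lemma~\ref{lem:error-w-z}. You are in fact more explicit than the paper about the final conversion from $\norm{\vw\su{t_f}}$ to $\norm{\vx}$ (which the paper's proof silently absorbs), and as you note that conversion costs a factor of roughly $\sqrt{2}$, so your route naturally yields a leading constant near $4\sqrt{2}$ rather than $5$ --- a looseness that is equally present in the paper's own argument.
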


\paragraph{Remark.} Since $\Delta$ is large (grows as $n^{1/6}$), this implies that the variation in the embedding values \emph{within clusters} is small. Quantitatively, we expect all the entries of $\vx$ to be around $\frac{\norm{\vx}}{\sqrt{n}}$ in magnitude. Suppose they are in the interval $[-2 \frac{\norm{\vx}}{\sqrt{n}}, 2 \frac{\norm{\vx}}{\sqrt{n}}]$. The theorem says that the distance of a typical $\vx$ to the cluster center is only about $\frac{\norm{\vx}}{\Delta \sqrt{n}}$. The catch with this theorem, however, is that it does not imply that there is a separation between the embedding values \emph{across} clusters. This will be the subject of Theorem~\ref{thm:main-separation}. But first, we prove Theorem~\ref{thm:cluster-inside}
\begin{proof}
The outline of the proof is as follows: first we argue that $\vx - \pmb{\mu}$ is, in fact, precisely the projection of $\vx$ to the space orthogonal to the span of the top $K$ eigenvectors of the matrix $\overline{\mC}$ (the ``ideal'' co-occurrence matrix). This implies that
\[ \norm{\vx - \pmb{\mu}}^2 \le \norm{ (I - \overline{\Pi}) \vw\su{t_f}}^2,\] where $\overline{\Pi}$ is a projection onto the top $(K-1)$ subspace of  $\overline{\mL}$. Noting that the difference between $\Pi$ and $\overline{\Pi}$ is small (by eigenspace perturbation theorems), we can then apply Lemma~\ref{lem:error-w-z} to conclude the argument.  The details of the proof are deferred to Appendix~\ref{proof:cluster-inside}
\end{proof}

Next, we wish to prove a cluster ``gap'' property: in other words, for the obtained embedding $\vx$, $|\mu_{\sV_i} - \mu_{\sV_j}|$ is large enough. To show this, we proceed by observing that during our iterations, the projection $\vz\su{t} = \Pi \vw\su{t}$ gets updated in a very simple manner. This property is only true in the case of the \emph{symmetric} SBM.

\newcommand{\err}{\textsf{err}}
\begin{lemma}\label{lem:z-update-simple}
Let $\vz\su{t}$ be defined as in (\ref{def:projection}), and suppose we start with $\norm{\vz\su{0}} \ge \frac{\eps}{10\sqrt{n}}$. Then for all $t \le t_f = O\left( \frac{\log (n/\Delta)}{\eta} \right)$, we have
\[  \vz\su{t} = (1+\eta \theta)^t \vz\su{0} + \err\su{t},  \]
where $\theta = \frac{n(a-b)}{K}$ and $\norm{\err\su{t}} \le O(\log \frac{n}{\Delta}) \frac{c}{\sqrt{n^{\rho+2}}} \norm{\vz\su{t}}$.
\end{lemma}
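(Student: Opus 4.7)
The plan is to project the recursion onto the top-$(K-1)$ eigenspace of $\mL$, exploit that in the \emph{symmetric} SBM the action of $\mL$ there is an approximate scalar multiplication by $(1+\eta\theta)$, and then unfold the resulting scalar-with-error recurrence.

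Starting from the update equation (\ref{eq:main-update}), I would left-multiply both sides by $\Pi$. Since $\Pi$ is the spectral projector for $\mL$ onto its top-$(K-1)$ eigenspace, it commutes with $\mL$, which gives $\vz\su{t+1} = \mL \vz\su{t} - \eta \Pi \mE\su{t} \vw\su{t}$. The symmetry of the SBM enters crucially at the next step: $\overline{\mM}$ has its top $K-1$ eigenvalues all equal to $-\theta = \frac{n(b-a)}{K}$, so $\overline{\mL}$ has exactly $K-1$ eigenvalues equal to $1+\eta\theta$. Since $\norm{\mL - \overline{\mL}} \le \eta \norm{\mM - \overline{\mM}} \le \eta c/\sqrt{n^{\rho+2}}$ by Assumption~\ref{ass:co-occurrence-error}, Weyl's inequality forces the top $K-1$ eigenvalues of $\mL$ to lie in $[1+\eta\theta - \eta c/\sqrt{n^{\rho+2}},\, 1+\eta\theta + \eta c/\sqrt{n^{\rho+2}}]$. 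Since $\mL$ is real symmetric, this translates to $\mL \vv = (1+\eta\theta)\vv + \mN \vv$ for every $\vv \in \mathrm{range}(\Pi)$, with $\norm{\mN} \le \eta c/\sqrt{n^{\rho+2}}$. Substituting yields
\begin{equation*}
\vz\su{t+1} = (1+\eta\theta)\vz\su{t} + \Delta_t, \qquad \Delta_t := \mN \vz\su{t} - \eta \Pi \mE\su{t} \vw\su{t},
\end{equation*}
which unfolds into $\vz\su{t} = (1+\eta\theta)^t \vz\su{0} + \err\su{t}$ with $\err\su{t} = \sum_{s=0}^{t-1}(1+\eta\theta)^{t-1-s} \Delta_s$.

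It remains to bound $\norm{\err\su{t}}$. I would split $\Delta_s$ into a \emph{spectral} piece and a \emph{nonlinear} piece. The spectral piece satisfies $\norm{\mN \vz\su{s}} \le \eta c/\sqrt{n^{\rho+2}} \cdot \norm{\vz\su{s}}$; running a joint induction that simultaneously maintains $\norm{\vz\su{s}} \lesssim (1+\eta\theta)^s \norm{\vz\su{0}}$ and the error bound, the geometric sum $\sum_s (1+\eta\theta)^{t-1-s} \norm{\mN \vz\su{s}}$ collapses to at most $(t_f \eta) \cdot (c/\sqrt{n^{\rho+2}}) \cdot \norm{\vz\su{t}}$, which is $O(\log(n/\Delta)) \cdot (c/\sqrt{n^{\rho+2}}) \cdot \norm{\vz\su{t}}$ by the upper bound on $t_f$ from Lemma~\ref{lem:num-iterations}. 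The nonlinear piece $\eta \norm{\Pi \mE\su{s} \vw\su{s}}$ is bounded by $4 \eta \norm{\vw\su{s}}^3$ via Lemma~\ref{lem:error-norm}; combined with $\norm{\vw\su{s}} \le \eps \Delta$, the lower bound $\norm{\vz\su{s}} \ge \eps/(10\sqrt{n})$ from the initialization analysis, and the parameter choices $\eps = n^{-2/3}$, $\Delta = n^{1/6}$, $\rho \in (1/3, 1)$, this contribution is absorbed into the same bound.

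The main obstacle is arranging the induction so that the growth lower bound $\norm{\vz\su{s}} \gtrsim (1+\eta\theta)^s \norm{\vz\su{0}}$ (needed to relate intermediate $\norm{\vz\su{s}}$ back to $\norm{\vz\su{t}}$ when collapsing the geometric sum) and the error upper bound propagate together through the iterations, since each depends on the other. A secondary but conceptually important point is that the \emph{symmetry} assumption is not merely cosmetic: without equality of the top $K-1$ eigenvalues of $\overline{\mM}$, the action of $\mL$ on $\mathrm{range}(\Pi)$ would not reduce to a scalar, the unfolded recurrence would be matrix-valued, and the clean closed form $(1+\eta\theta)^t \vz\su{0}$ stated in the lemma would no longer hold.
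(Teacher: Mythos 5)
Your proposal follows essentially the same route as the paper's proof: project the update onto the top-$(K-1)$ eigenspace, use Weyl's inequality and the degeneracy of the top eigenvalues of $\overline{\mM}$ (the symmetric-SBM structure) to write $\mL$ on $\mathrm{range}(\Pi)$ as $(1+\eta\theta)\mI$ plus an $O(\eta c/\sqrt{n^{\rho+2}})$ perturbation, unfold the resulting scalar recurrence, and bound the accumulated error by a geometric sum controlled via the iteration bound $t_f = O(\log(n/\Delta)/\eta)$. Your explicit attention to the joint induction coupling the growth lower bound on $\norm{\vz\su{s}}$ with the error upper bound is a welcome precisification of a step the paper handles more tersely, but the argument is the same.
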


The lemma is a strong structural statement, saying that $\vz$ essentially only gets scaled in each iteration! 

\begin{proof}
By multiplying the update, Equation (\ref{eq:main-update}), by $\Pi$, we see that
\[ \vz\su{t+1} = \mL \vz\su{t} - \eta \Pi \mE\su{t} \vw\su{t}.  \]
As we have done in the proof of Lemma~\ref{lem:num-iterations}, the norm of the second term can be bounded by $4\eta (\eps \Delta)^3 < \frac{\norm{\vz\su{t}}}{\sqrt{n}}$. 
Now for the first term, observe that the top $(K-1)$ eigenvalues of $\Pi$ are all in the range $(1+\eta \theta - \eta \frac{c}{\sqrt{n^{\rho+2}}}, 1+\eta \theta + \eta \frac{c}{\sqrt{n^{\rho+2}}})$. Thus $\mL$ acts as a scaling of the identity (within the space of the top $(K-1)$ eigenvectors), up to an additive error $\eta \frac{c}{\sqrt{n^{\rho+2}}}$. This implies that
\[ \vz\su{t+1} = (1+\eta \theta) \vz\su{t} + \eta \frac{2c}{\sqrt{n^{\rho+2}}} \xi\su{t},  \]
for some error vector $\xi\su{t}$ with $\norm{\xi\su{t}} \le \norm{\vz\su{t}}$.  Once we have this for all $t$, we can sum up to obtain
\begin{equation}\label{eq:z-update-helper}
    \vz\su{t} = (1+\eta \theta)^t \vz\su{0} + \frac{2c\eta}{\sqrt{n^{\rho+2}}} \cdot \err,
\end{equation}
where 
\[ \err = \left[ (1+\eta \theta)^{t-1} \xi\su{0} + (1+\eta\theta)^{t-2} \xi\su{1} + \dots  \right]. \]
Now we can inductively maintain the property that $\norm{\vz\su{j}} \le 2 (1+\eta \theta)^j \norm{\vz\su{0}}$, to conclude that 
\[ \norm{\err} \le t \cdot (1+\eta\theta)^t \norm{\vz\su{0}}. \]
Noting that $t \le \frac{ \log(n/\Delta)}{\eta}$, we can plug this into Equation (\ref{eq:z-update-helper}) to complete the proof of the lemma.
\end{proof}

This structural theorem is very powerful. To see it, suppose $\vu$ is any vector in $\R^{2n}$, and suppose we originally had 
\[ |\iprod{\vu, \vz\su{0}}| \ge \beta \norm{\vz\su{0}}. \]
Then, we can use Lemma~\ref{lem:z-update-simple} to conclude that
\[ |\iprod{\vu, \vz\su{t_f}}| \ge \beta \norm{\vz\su{t_f}} - \frac{c \log (n/\Delta)}{\sqrt{n^{\rho+2}}} \norm{\vz\su{t_f}}. \]

Thus, if $\beta > \frac{4 c\log \frac{n}{\Delta}}{\sqrt{n^{\rho+2}}}$, we can conclude that 
\[ |\iprod{\vu, \vz\su{t_f}}| \ge \frac{\beta}{2} \norm{\vz\su{t_f}} . \]

This leads us to the following result.

\begin{theorem}\label{thm:main-separation}
Suppose $G$ is drawn from a symmetric SBM with $K$ communities and suppose the co-occurrence matrix $\mC$ used for constructing the embeddings satisfies Assumption~\ref{ass:co-occurrence-error}. Let $\vx$ be the ($x$ component of the) embedding obtained by Algorithm \ref{alg:gradient_descent}. Then with probability $\ge 0.9$ over the initialization, the embedding satisfies:
\[ \forall \text{ clusters } i \ne j,~ | \mu_{\sV_i} - \mu_{\sV_j} | \ge \frac{\eps \Delta}{20 K^2 \sqrt{n}}. \] (As before, $\mu_{\sV_i} := \frac{\sum_{r \in \sV_i} x_r}{|\sV_i|}$).
\end{theorem}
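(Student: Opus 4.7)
The strategy is to reduce the separation claim to lower bounding a single inner product, and then invoke the structural Lemma~\ref{lem:z-update-simple}.  For a fixed pair of clusters $i \ne j$, set
\[ \vu := \frac{1}{\sqrt{2n/K}} \begin{bmatrix} \ve_{\sV_i} - \ve_{\sV_j} \\ \mathbf{0} \end{bmatrix} \in \R^{2n}, \]
a unit vector chosen so that $\iprod{\vu, \vw\su{t_f}} = \sqrt{\tfrac{n}{2K}}(\mu_{\sV_i} - \mu_{\sV_j})$.  Thus it suffices to lower bound $|\iprod{\vu, \vw\su{t_f}}|$.  Split it as $\iprod{\vu, \vw\su{t_f}} = \iprod{\vu, \vz\su{t_f}} + \iprod{\vu, \vw\su{t_f} - \vz\su{t_f}}$; by Cauchy--Schwarz and Lemma~\ref{lem:error-w-z} the second summand has magnitude at most $\tfrac{4 \norm{\vw\su{t_f}}}{\Delta} = O(\eps)$, which will be negligible once $\Delta$ is large.

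For the main term $\iprod{\vu, \vz\su{t_f}}$, I will apply the structural consequence derived immediately after Lemma~\ref{lem:z-update-simple}: provided the initialization satisfies $|\iprod{\vu, \vz\su{0}}| \ge \beta \norm{\vz\su{0}}$ with $\beta > 4c\log(n/\Delta)/\sqrt{n^{\rho+2}}$, one automatically has $|\iprod{\vu, \vz\su{t_f}}| \ge \tfrac{\beta}{2} \norm{\vz\su{t_f}}$.  Combined with the lower bound $\norm{\vz\su{t_f}} \ge \norm{\vw\su{t_f}}(1-4/\Delta) \ge \eps\Delta/2$ (from Lemma~\ref{lem:error-w-z} and the termination condition), it suffices to show that the random initialization produces $\beta = \Omega(1/\sqrt{K})$ for every pair $(i,j)$.

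Producing this initial bound is the heart of the argument, and I expect it to be the main obstacle.  The plan is to first describe the projection $\overline{\Pi}$ onto the top-$(K-1)$ eigenspace of $\overline{\mL}$ explicitly.  The eigenvector calculation in Section~\ref{sec:structure_algo} shows this eigenspace equals $\bigl\{ \tfrac{1}{\sqrt{2}} \begin{bmatrix} \vv \\ \vv \end{bmatrix} : \vv \in \mathrm{span}\{\ve_{\sV_k} - \ve_{\sV_1}\}_{k=2}^K \bigr\}$.  Since $\ve_{\sV_i} - \ve_{\sV_j}$ lies in this span, a direct calculation gives $\overline{\Pi}\vu = \tfrac{1}{2\sqrt{2n/K}} \begin{bmatrix} \ve_{\sV_i} - \ve_{\sV_j} \\ \ve_{\sV_i} - \ve_{\sV_j} \end{bmatrix}$, with $\norm{\overline{\Pi}\vu} = 1/\sqrt{2}$.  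A Davis--Kahan argument, using the spectral gap $\Omega(\eta/K)$ from Lemma~\ref{lem:cL-properties} together with $\norm{\mL - \overline{\mL}} = O(\eta/\sqrt{n^{\rho+2}})$ inherited from Assumption~\ref{ass:co-occurrence-error}, yields $\norm{\Pi - \overline{\Pi}} = O(K/\sqrt{n^{\rho+2}}) = o(1)$, so $\Pi\vu \approx \overline{\Pi}\vu$ in norm.  For the uniform-on-sphere initialization $\vw\su{0}$ of radius $\eps$ in $\R^{2n}$, standard anti-concentration applied to the direction $\Pi\vu$ gives $|\iprod{\Pi\vu, \vw\su{0}}| = \Omega(\eps/\sqrt{n})$, while concentration of the projection norm gives $\norm{\vz\su{0}} = \norm{\Pi\vw\su{0}} = O(\eps \sqrt{K/n})$, each holding with probability at least $1 - 0.1/\binom{K}{2}$; hence $\beta = \Omega(1/\sqrt{K})$.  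A union bound over the $\binom{K}{2}$ pairs (valid because $K$ is constant) ensures the bound simultaneously with probability at least $0.9$.

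Chaining everything: $|\iprod{\vu, \vz\su{t_f}}| \ge (\beta/2)\norm{\vz\su{t_f}} = \Omega(\eps\Delta/\sqrt{K})$, which dominates the $O(\eps)$ error from $\vw\su{t_f} - \vz\su{t_f}$ since $\Delta = n^{1/6} \gg \sqrt{K}$.  Therefore $|\iprod{\vu, \vw\su{t_f}}| = \Omega(\eps\Delta/\sqrt{K})$ and $|\mu_{\sV_i} - \mu_{\sV_j}| = \sqrt{2K/n}\,|\iprod{\vu, \vw\su{t_f}}| = \Omega(\eps\Delta/\sqrt{n})$, comfortably stronger than the stated $\eps\Delta/(20K^2\sqrt{n})$ (the $K^2$ factor absorbs the constants lost to Davis--Kahan, the Gaussian tails, and the union bound).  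The delicate part is orchestrating Step~3 so that eigenspace perturbation, anti-concentration of $\iprod{\Pi\vu,\vw\su{0}}$, and concentration of $\norm{\Pi\vw\su{0}}$ all hold against a single initialization simultaneously for every cluster pair.
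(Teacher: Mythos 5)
Your proposal is correct and follows essentially the same route as the paper's own proof: test against the indicator-difference vector $\vu$, propagate an initial Gaussian anti-concentration bound on $\iprod{\vu,\vz\su{0}}$ through the structural consequence of Lemma~\ref{lem:z-update-simple}, and union bound over the $\binom{K}{2}$ cluster pairs. If anything you are slightly more careful than the paper, which asserts that $\vu$ lies in the top-$(K-1)$ eigenspace (it does not --- only its projection, of constant relative norm, does, as you compute) and which silently replaces $\iprod{\vu,\vw\su{t_f}}$ by $\iprod{\vu,\vz\su{t_f}}$ without accounting for the $O(\eps)$ term you control via Lemma~\ref{lem:error-w-z}.
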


\paragraph{Remark.} It is natural to ask if this separation is sufficient. For this, we do the following heuristic calculation. In the end, we have $\norm{x} =\eps \Delta$. This means all the coordinates are roughly of magnitude $\frac{\eps \Delta}{\sqrt{n}}$ (say they are in the range $[-2\frac{\eps \Delta}{\sqrt{n}}, 2\frac{\eps \Delta}{\sqrt{n}}] $. The theorem says that the typical gap between the cluster centers is a constant factor of this interval length (assuming $K$ is a constant), with high probability. 

The main idea of the proof is to show that a sufficient amount of ``initial separation'' between cluster means must exist because the initialization is random. Then, the discussion preceding the theorem can be used to show that this separation must persist as we update. The key point is that as a fraction of the length $\vz\su{0}$, the initial separation is significant. The proof details are deferred to Appendix~\ref{proof:thm:main-separation}.

Theorems \ref{thm:cluster-inside} and \ref{thm:main-separation} lead to the following approximate recovery result.

\begin{theorem}\label{thm:weak_recovery}
Suppose $G$ is drawn from a symmetric SBM with $K$ communities and suppose the co-occurrence matrix $\mC$ used for constructing the embeddings satisfies Assumption~\ref{ass:co-occurrence-error} and $\rho \in \left(\frac{1}{3}, 1 \right)$. Suppose we run Algorithm \ref{alg:gradient_descent} for $t_f < \frac{4\log{\left(n/\Delta\right)}}{\eta}$ iterations. Then with high probability, we recover $1 - o(1)$ fraction of each community.
\end{theorem}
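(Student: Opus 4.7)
My plan is to derive Theorem~\ref{thm:weak_recovery} as a direct consequence of Theorems~\ref{thm:cluster-inside} and~\ref{thm:main-separation}, via a Markov-style counting argument on the coordinates of $\vx - \pmb{\mu}$. Intuitively, Theorem~\ref{thm:cluster-inside} caps the total squared deviation of the embedding coordinates from their cluster means, while Theorem~\ref{thm:main-separation} sets a lower bound on the gap between distinct cluster means. Under a ``nearest cluster mean'' classification rule, misclassification can only occur at coordinates whose deviation exceeds roughly half the minimum gap, so bounding the count of such outliers immediately gives the recovery rate.

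Concretely, I would classify each node $j$ as belonging to $\arg\min_{i} |x_j - \mu_{\sV_i}|$. For any $j \in \sV_i$, if $|x_j - \mu_{\sV_i}| < \tau := \frac{1}{2} \min_{i' \ne i} |\mu_{\sV_i} - \mu_{\sV_{i'}}|$, this assignment is correct. Theorem~\ref{thm:main-separation} guarantees $\tau \ge \frac{\eps \Delta}{40 K^2 \sqrt{n}}$ with probability at least $0.9$. At termination, the stopping criterion together with the $(1+2\eta)$-step growth bound from the proof of Lemma~\ref{lem:num-iterations} gives $\norm{\vw\su{t_f}} = \Theta(\eps \Delta)$, hence $\norm{\vx} = O(\eps\Delta)$. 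Combining with Theorem~\ref{thm:cluster-inside}, we get $\norm{\vx - \pmb{\mu}}^2 \le \frac{25 \norm{\vx}^2}{\Delta^2} = O(\eps^2)$. Markov's inequality applied to coordinate-wise squared deviations then bounds the number of indices $j$ with $|x_j - \mu_j| \ge \tau$ by $\norm{\vx - \pmb{\mu}}^2 / \tau^2 = O\left( \frac{K^4 n}{\Delta^2} \right)$.

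Plugging in $\eps = n^{-2/3}$ and $\Delta = n^{1/6}$ from Algorithm~\ref{alg:gradient_descent}, the total number of misclassified indices is $O(K^4 n^{2/3}) = O(n^{2/3})$ for constant $K$. Since each community has size $n/K$, the misclassified fraction within any \emph{single} community is at most $O(n^{-1/3}) = o(1)$, giving the claimed $1 - o(1)$ recovery rate per community. The success probability is $\ge 0.9 - o(1)$, inheriting the joint success probability of Theorem~\ref{thm:main-separation} and Lemma~\ref{lem:num-iterations}.

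I do not anticipate a genuine obstacle: the counting step is routine once the two structural theorems are in place. The one item that warrants care is confirming that $\norm{\vx} = \Theta(\eps\Delta)$ rather than something much larger, since an overshoot at the last iteration would weaken the Markov bound; this is why I invoke the single-step spectral bound $\norm{\vw\su{t+1}} \le (1+2\eta)\norm{\vw\su{t}}$ used in the proof of Lemma~\ref{lem:num-iterations}, which gives $\norm{\vx} \le (1+2\eta)\eps\Delta$. The assumption $\rho > 1/3$ is inherited from Theorem~\ref{thm:cluster-inside}; no additional smallness in $\rho$ is required for the counting argument itself.
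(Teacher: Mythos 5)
Your proposal is correct and follows essentially the same route as the paper: combine the within-cluster bound of Theorem~\ref{thm:cluster-inside} with the mean-separation bound of Theorem~\ref{thm:main-separation}, then count outlier coordinates via a Markov/Chebyshev-type argument to show that only an $O(n^{-1/3})$ fraction of each community can deviate by more than half the minimum inter-cluster gap. The only cosmetic difference is that you phrase the counting step deterministically (number of bad indices $\le \norm{\vx-\pmb{\mu}}^2/\tau^2$) while the paper phrases it as Markov's inequality over a uniformly random vertex in a cluster; the quantitative conclusion, $O(K^4 n^{2/3})$ misclassified vertices, is identical.
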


\begin{proof}
From Theorem \ref{thm:cluster-inside}, after $t_f$ iterations, we have $$\left\lVert \vx - \mathbf{\mu} \right\rVert^2 = \sum_{i \in [K]} \sum_{j \in V_i} | x_j - \mu_i |^2 \le \frac{c_1\left\lVert \vx \right\rVert^2}{\Delta^2}.$$ for a positive constant $c_1$. Furthermore, from Theorem \ref{thm:main-separation} we have for all pairs of clusters $i \ne j$, $$\left\lvert \mu_i - \mu_j \right\rvert^2 \ge \frac{\left( \epsilon\Delta \right)^2 }{c_2 K^4 n}$$ for a positive constant $c_2$.

The result follows from a simple application of Markov's inequality: \begin{equation}\label{eq:markov}\Pr{\left(\lvert x_j - \mu_i \rvert^2 \ge \frac{(\epsilon \Delta)^2}{4 c_2 K^4 n} \right)} \le \frac{4 c_2 K^4 n \mathbb{E}[|x_j - \mu_i |^2]}{(\epsilon \Delta)^2}.\end{equation}

Each cluster has size $n/K$. This, along with Theorem \ref{thm:cluster-inside}, implies that within a cluster $V_i$ we have $\mathbb{E}[\lvert x_j - \mu_i \rvert^2] \le \frac{K}{n}\frac{2c_1(\epsilon \Delta)^2}{n^{1/3}}$. (Recall that the algorithm terminates after $t_f$ iterations when $\epsilon \Delta \le \lVert \vw^{t_f} \rVert \le 2\epsilon \Delta $.) Plugging this into Equation $(\ref{eq:markov})$ we have

\begin{equation}
    \Pr{\left(\lvert x_j - \mu_i \rvert^2 \ge \frac{(\epsilon \Delta)^2}{4 c_2 K^4 n} \right)} \le \frac{8 c_1 c_2 K^5 }{n^{1/3}}.    
\end{equation}

This implies that the number of vertices in a cluster $V_i$ that are not within a distance of $\frac{1}{2} \cdot \frac{\epsilon \Delta}{(c_2 K^4 n)^{1/2}}$ of their cluster mean (or half the distance between cluster means) is no greater than $\frac{8 c_1 c_2 K^5}{n^{1/3}}$. (We're also assuming that the number of clusters $K$ is constant.) This concludes the proof.
\end{proof}

\section{Experiments}
\label{experiments}
This section presents experimental results that support our theoretical analysis. Section \ref{sec:calc_emb} shows that DeepWalk embeddings can completely recover the community structure in a graph even when the embedding dimension is lower than the number of communities. Section \ref{sec:lin_approx} shows that embeddings trained using a linear approximation to the update equation remains close to embeddings trained using the nonlinear update.

\vspace{-6pt}
\subsection{Calculated Embeddings}\label{sec:calc_emb}

\begin{figure*}[tp]
    \centering
    \begin{subfigure}
        \centering
        \includegraphics[scale=0.37]{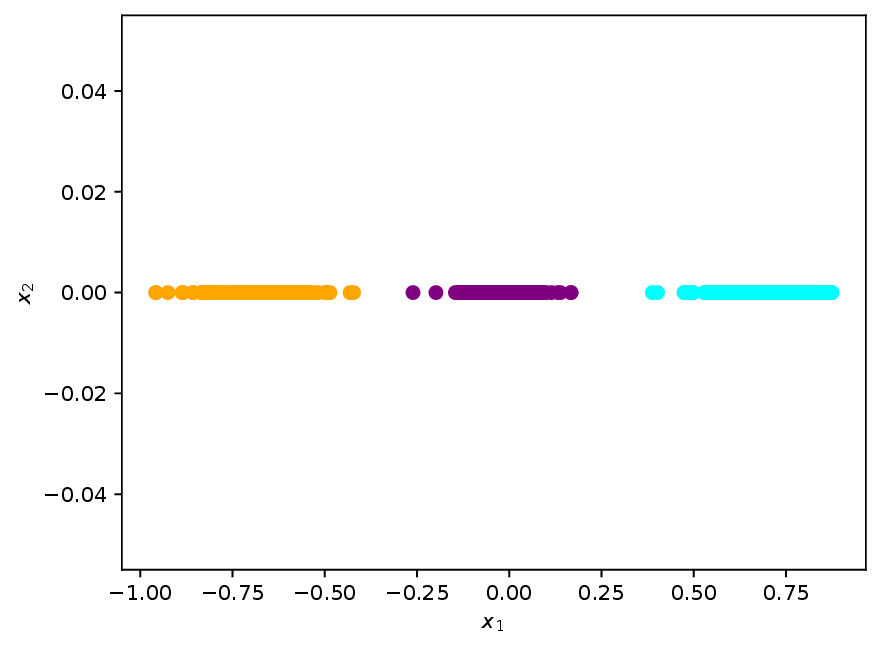}
    \end{subfigure}
    \begin{subfigure}
        \centering
        \includegraphics[scale=0.37]{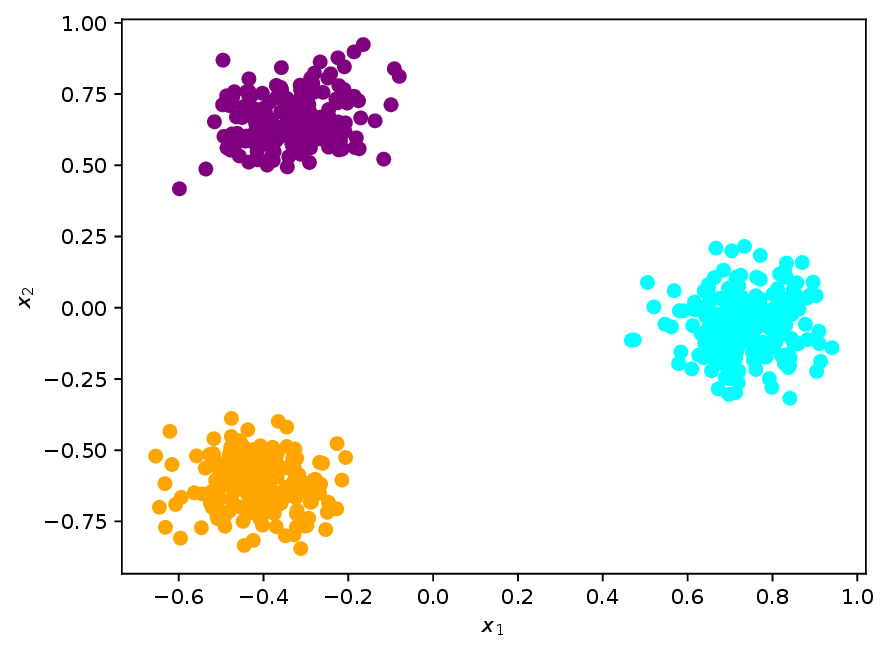}
    \end{subfigure}
    \begin{subfigure}
        \centering
        \includegraphics[scale=0.37]{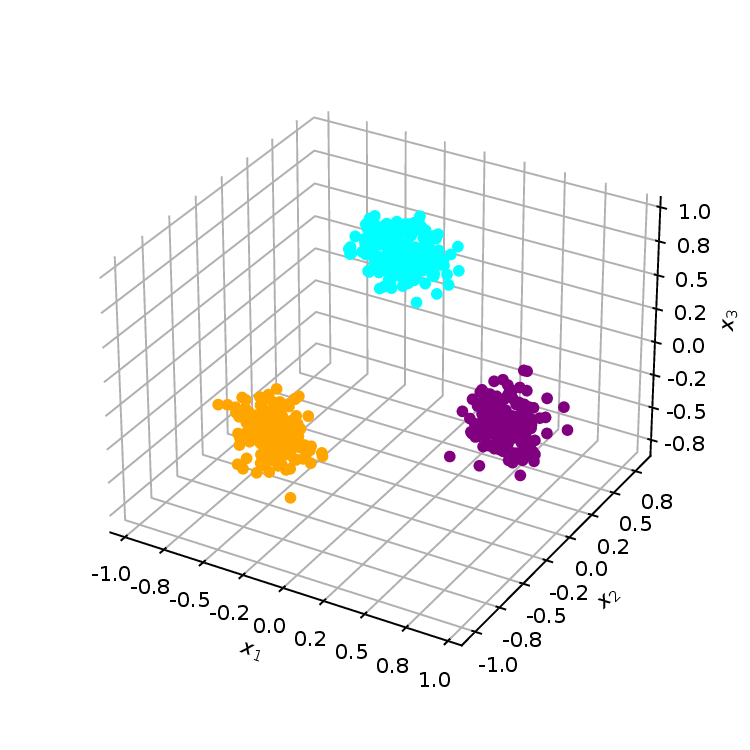}
    \end{subfigure}
    \caption{For a random graph drawn from a stochastic block model with $K=3$ blocks and $n=600$, we show the calculated (left) 1-dimensional (middle) 2-dimensional and (right) 3-dimensional embeddings.}
    \label{fig:embeddings}
\end{figure*}

This section presents $1$-dimensional, $2$-dimensional, and $3$-dimensional embeddings produced by training the deepwalk algorithm. A graph was drawn from a stochastic block model with $K=3$ communities, $n=600$ nodes and parameters $q=0.1$ and $p=4q$. We run the algorithm for $T=100$ iterations and used a learning rate of $\eta = 0.01$. The embeddings are initialized randomly so that $\lVert \vx^{(t)} \rVert_{\infty} \le 0.01$ and $\lVert \vy^{(t)} \rVert_{\infty} \le 0.01$.  

All three sets of embeddings completely recover the community structure in the graph (see Figure \ref{fig:embeddings}).  As our analysis shows, even 1-dimensional embeddings show a clear separation between the clusters. 

\vspace{-6pt}
\subsection{Linear Approximation to the Update}\label{sec:lin_approx}

We note in our analysis that the linear part of the update dominates the convergence behavior, suggesting that the update equations can be approximated by a linear function. This section provides experimental results that supports this observation even for reasonably large graphs.

Three random graphs are drawn from a stochastic block model with $K=2$ clusters with $n=200, 500, 1000$ nodes and parameters $q=0.1$ and $p=4q$. On each graph, embeddings are calculated using both the original nonlinear gradient descent update equations and the linear approximation to the update equation. The distance $d(\vx^{(t)}, \mathbf{\ell}^{(t)}) = \lVert \vx^{(t)} - \mathbf{\ell}^{(t)} \rVert$ between an embedding trained with a nonlinear update $\vx^{(t)}$ and an embedding trained with a linear update $\mathbf{\ell}^{(t)}$ is tracked throughout the training procedure (see Figure~\ref{fig:close_to_linear}). The embeddings were initialized randomly so that $\lVert \vx^{(0)} \rVert_{\infty} = \lVert \mathbf{\ell}^{(0)} \rVert_{\infty} \le \frac{1}{\sqrt{n}}$ and $\vw^{(0)} = \mathbf{\ell}^{(0)}$. The learning rate was set for $\eta = \frac{1}{n}$ and training was rate for $T=75$ iterations.

The distance $d(\vx^{(t)}, \vw^{(t)})$ remains nearly zero for the first several iterations, allowing ample time for the clusters to separate, despite the graphs being a reasonable size. The results also suggests that the linear approximation may hold for a larger parameter regime than the ones we use in our analysis. Determining the right trade-offs between the radius of initialization $\eps$, the learning rate $\eta$, and the SBM parameters $K, p, q$, is an interesting open question.

\begin{figure}
    \centering
    \includegraphics[scale=0.5]{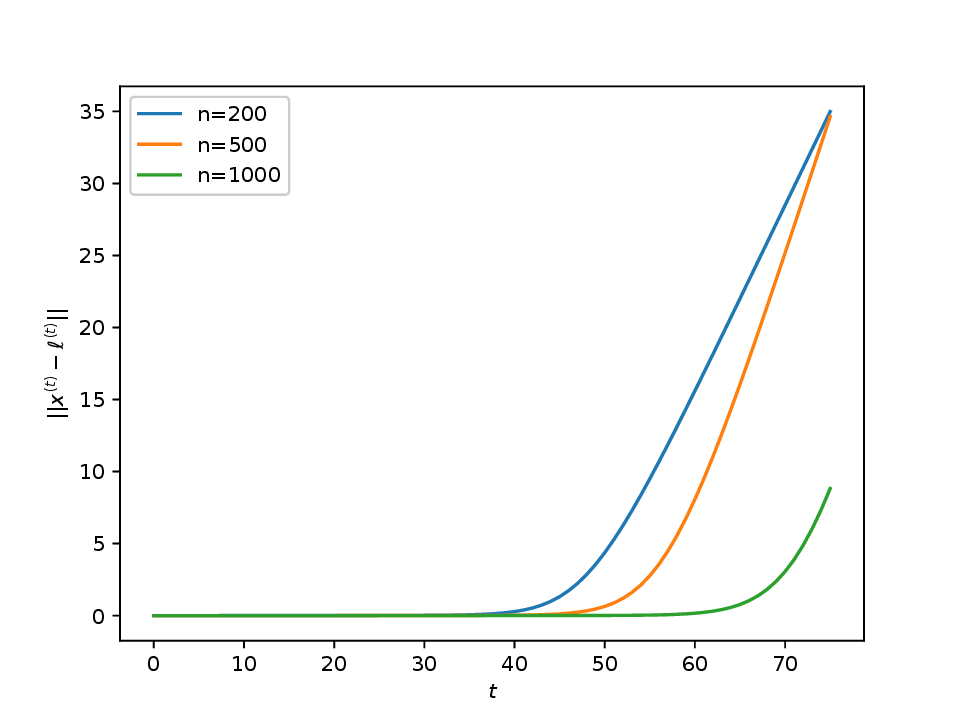}
    \caption{For random graphs drawn from a stochastic block model with $K=2$ blocks and $n=200, 500, 1000$ nodes, we track the distance between the original nonlinear gradient descent update for deepwalk and its linear approximation.}
    \label{fig:close_to_linear}
\end{figure}

\section{Conclusion}
We give the first provable convergence guarantees, that we are aware of, for the DeepWalk algorithm and obtaining low-dimensional embeddings of vertices of a graph. We show that for a graph drawn from a stochastic block model (SBM), the DeepWalk embeddings obtained by a random initialization in a small enough ball and trained with gradient descent, recovers the community structure with high probability. Unlike previous works, we analyze the gradient descent dynamics directly; the key technical tool is to show that when embeddings are of small enough length, the gradient descent update can be approximated by a linear update up to a small error. Our approximation technique may be applicable to other related graph embedding problems, which gives an interesting avenue for future work. 

\section*{Acknowledgements}
The authors are supported by the National Science Foundation under Grant Nos. CCF-2008688 and CCF-2047288.

\bibliography{references}
\bibliographystyle{abbrvnat}

\newpage
\appendix
\onecolumn

\section{Proofs From Section \ref{sec:structure_algo}}
\subsection{Properties of Co-occurrence Matrix}\label{sec:cooccurrence_prop}

\begin{lemma}\label{lem:cooccurrence_prop}
    Suppose $\overline{\mA}$ is the expected adjacency matrix of a graph $G$ drawn from an SBM with $K$ communities and parameters $p,q$ as described in Section~\ref{sec:background-sbm}. Also, suppose that $\overline{\mC}$ is the co-occurrence matrix constructed as in Lemma~\ref{lem:co-occur-ij} from the expected adjacency matrix $\overline{\mA}$ with random walks of length $L$ and a window size of $T$ such that $L > T$.  Let $\alpha_1 \ge \alpha_2 \ge ... \ge \alpha_n$ be the eigenvalues of $\overline{\mA}$ and $ \lambda_1 \ge \lambda_2 \ge ... \ge \lambda_n$ be the eigenvalues of $\overline{\mC}$. Then the matrix $\overline{\mC}$ has the following properties:
    \begin{enumerate}
        \item $\overline{\mC}$ has a block structure. In other words, it has the form 
        \begin{equation*}
                \overline{\mC} = \underbrace{\begin{bmatrix} a & b & ... & b \\ b & a & ... & b \\ \vdots & \vdots & \ddots & \vdots \\ b & b & ... & a \end{bmatrix}}_{K \times K} \otimes ~\mJ_{\frac{n}{K} \times \frac{n}{K}}
                \end{equation*} for some parameters a, b.\label{lem:C_property_1}
        \item The eigenvalues of $\overline{\mC}$ are $$\lambda_i = \begin{cases}\frac{2}{n}\left(TL - \frac{T(T+1)}{2}\right) &\text{ if } i=1, \\ \frac{2}{n}\sum_{t=1}^T (L-t) \left(\frac{\alpha_i}{\alpha_1}\right)^t &\text{ if } i=2, \dots, K, \\ 0 &\text{ if } i=K+1, \dots, n, \end{cases}$$ where $\alpha_1 = \frac{n}{K}p + \frac{n(K-1)}{K}q$ and $\alpha_i = \frac{n}{K}(p-q)$ for $i = 2, \dots, K$.\label{lem:C_property_2}
        \item The difference $a-b$ between the on-diagonal block and off-diagonal block entries $a$ and $b$ of $\overline{\mC}$ is $$a-b = \frac{2K}{n^2}\sum_{t=1}^T (L-t) \left(\frac{p-q}{p+(K-1)q}\right)^t.$$ \label{lem:C_property_3}
        \item The on-diagonal block entries $a$ and off-diagonal block entries $b$ of $\overline{\mC}$ are: 
        \begin{align*} 
        a &= \frac{2}{n^2}\left( \left(TL - \frac{T(T+1)}{2}\right) + (K-1)\sum_{t=1}^T (L-t) \left(\frac{p-q}{p+(K-1)q)} \right)^t \right), \\
        b &= \frac{2}{n^2} \left( \left(TL - \frac{T(T+1)}{2}\right) - \sum_{t=1}^T (L-t) \left(\frac{p-q}{p+(K-1)q}\right)^t \right).\end{align*} \label{lem:C_property_4}
    \end{enumerate}
\end{lemma}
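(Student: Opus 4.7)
}

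The key simplification comes from the symmetry of the SBM: since every row of $\overline{\mA}$ has the same sum $\overline{d} = \frac{n}{K}p + \frac{n(K-1)}{K}q$, we have $\mD_{\overline{\mA}} = \overline{d}\, \mI$, so $\overline{\mP} = \overline{\mA}/\overline{d}$ and
\[
\overline{\mC} \;=\; 2\sum_{t=1}^T \frac{L-t}{n\overline{d}}\,\mD_{\overline{\mA}}\overline{\mP}^t \;=\; \frac{2}{n}\sum_{t=1}^T (L-t)\,\overline{\mP}^t \;=\; \frac{2}{n}\sum_{t=1}^T (L-t)\,\frac{\overline{\mA}^t}{\overline{d}^t}.
\]
Thus $\overline{\mC}$ is a polynomial in $\overline{\mA}$; this reduces everything to computing powers of $\overline{\mA}$.

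For part~\ref{lem:C_property_1}, I would use the Kronecker factorization $\overline{\mA} = \mB\otimes \mJ_{n/K}$ together with the identity $\mJ_{n/K}^{\,t} = (n/K)^{t-1}\mJ_{n/K}$ to get $\overline{\mA}^t = (n/K)^{t-1}\,\mB^t \otimes \mJ_{n/K}$. Since each power retains the Kronecker form with $\mJ_{n/K}$ on the right, so does $\overline{\mC}$; being a sum of $K{\times}K$ block matrices whose diagonal blocks are all scalar multiples of $\mJ_{n/K}$ and whose off-diagonal blocks are all scalar multiples of $\mJ_{n/K}$ (by the symmetry of $\mB$), we get the claimed form with parameters $a$ (diagonal blocks) and $b$ (off-diagonal blocks).

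For part~\ref{lem:C_property_2}, I would diagonalize $\mB$ via the two-projector decomposition
\[
\mB \;=\; (p+(K-1)q)\,\tfrac{\mJ_K}{K} \;+\; (p-q)\,\bigl(\mI - \tfrac{\mJ_K}{K}\bigr),
\]
which gives eigenvalue $p+(K-1)q$ on $\mathrm{span}(\mathbf{1})$ and $p-q$ with multiplicity $K-1$ on its orthogonal complement. Combined with the Kronecker eigenvalue rule and $\mJ_{n/K}$ having eigenvalues $n/K$ (once) and $0$ ($n/K-1$ times), the nonzero eigenvalues of $\overline{\mA}$ are exactly $\alpha_1 = \overline{d}$ (once) and $\alpha_2 = \cdots = \alpha_K = (n/K)(p-q)$. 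Plugging into $\overline{\mC} = \frac{2}{n}\sum_t (L-t)(\overline{\mA}/\overline{d})^t$ yields the eigenvalue formulas; note that for $i=1$, $\alpha_1/\overline{d} = 1$, so the inner sum collapses to $TL - T(T+1)/2$.

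For parts~\ref{lem:C_property_3} and~\ref{lem:C_property_4}, I would compute the entries of $\mB^t$ directly from the projector decomposition above:
\[
(\mB^t)_{ii} \;=\; \tfrac{1}{K}(p+(K-1)q)^t + \tfrac{K-1}{K}(p-q)^t, \qquad (\mB^t)_{ij} \;=\; \tfrac{1}{K}\bigl[(p+(K-1)q)^t - (p-q)^t\bigr]\ \ (i\neq j).
\]
These are constant within each block of $\overline{\mA}^t$, and after dividing by $\overline{d}^t = (n/K)^t(p+(K-1)q)^t$ and summing with weights $\frac{2(L-t)}{n}$, the constants $a$ and $b$ follow immediately; the formula for $a-b$ in part~\ref{lem:C_property_3} is the especially clean sum $\frac{2K}{n^2}\sum_t (L-t)\bigl(\frac{p-q}{p+(K-1)q}\bigr)^t$ since the $(p+(K-1)q)^t$ terms cancel.

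No step is a real obstacle---the entire argument is careful matrix algebra. The only bookkeeping care needed is tracking the $(n/K)^{t-1}$ factor from the $\mJ_{n/K}$ powers so it cancels properly against $\overline{d}^t$ to leave the stated normalization $\frac{2}{n^2}$ in parts~\ref{lem:C_property_3} and~\ref{lem:C_property_4}.
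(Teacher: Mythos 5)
Your proposal is correct, and it rests on the same key simplification as the paper's proof: since every row sum of $\overline{\mA}$ equals $\overline{d}$, we have $\mD_{\overline{\mA}} = \overline{d}\,\mI$ and hence $\overline{\mC} = \frac{2}{n}\sum_{t=1}^T (L-t)\,\overline{\mP}^t$ is a polynomial in $\overline{\mA}/\overline{d}$, which immediately gives Part 2 once the eigenvalues of $\overline{\mA}$ are known. Where you diverge is in Parts 3 and 4: the paper never computes entries of $\overline{\mC}$ directly. Instead it writes the eigenvalues of $\overline{\mC}$ a second time in terms of $a$ and $b$ via the block structure ($\lambda_1 = \frac{n}{K}a + \frac{n(K-1)}{K}b$, $\lambda_2 = \frac{n}{K}(a-b)$, so $\lambda_1-\lambda_2 = nb$) and inverts this $2\times 2$ linear relation against the formulas from Part 2. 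You instead compute $(\mB^t)_{ii}$ and $(\mB^t)_{ij}$ explicitly from the projector decomposition $\mB = (p+(K-1)q)\frac{\mJ_K}{K} + (p-q)(\mI - \frac{\mJ_K}{K})$ and read off $a$ and $b$ entrywise from $\overline{\mA}^t = (n/K)^{t-1}\,\mB^t\otimes\mJ_{n/K}$. Both routes are valid and of comparable length; your entrywise route has the advantage of making Part 1 genuinely rigorous (the paper dismisses it as ``straightforward''), since it exhibits why every power $\mB^t$ has constant diagonal and constant off-diagonal, whereas the paper's route avoids tracking the $(n/K)^{t-1}$ normalization you must cancel by hand. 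One small wording caveat: in your Part 1 argument you attribute the constancy of the diagonal and off-diagonal blocks to ``the symmetry of $\mB$,'' but symmetry alone does not give this; what you actually need (and what your projector decomposition supplies) is that $\mB$ lies in the span of $\mI$ and $\mJ_K$, an algebra closed under multiplication. This is a phrasing issue rather than a gap, since the computation you carry out in Parts 3 and 4 establishes the needed fact.
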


\begin{proof}
(\textit{Proof of 1.}) Straightforward from the block structure of $\overline{\mA}$.

(\textit{Proof of 2.}) From Lemma~\ref{lem:co-occur-ij}, we can write $\overline{\mC}$ as

\begin{equation}
    \overline{\mC} = 2\sum_{t=1}^T \frac{(L-t)}{n\overline{d}} \mD_{\overline{\mA}}\left( \overline{\mP}^{t} \right)
\end{equation}

where $\overline{d} = \frac{n}{K}p + \frac{n(K-1)}{K}q$ and $\overline{\mP} = \mD_{\overline{\mA}}^{-1} \overline{\mA}$. Since $\mD_{\overline{\mA}}^{-1}$ is a diagonal matrix with diagonal entries $\frac{1}{\overline{d}}$, the matrix $\overline{\mP}$ is a real-symmetric matrix and we can write it as $$\overline{\mP} = \mV \mathbf{\Sigma} \mV^{\top},$$ where the columns of $\mV$ are the eigenvectors of $\overline{\mP}$ and $\mathbf{\Sigma} = diag(\sigma_1, \sigma_2, ..., \sigma_n)$ with $\sigma_1 \ge \sigma_2 \ge ... \ge \sigma_n$ are the eigenvalues of $\overline{\mP}$. 

Recall that the eigenvalues of $\overline{\mA}$ are $$\alpha_i = \begin{cases} \frac{n}{K}p + \frac{n(K-1)}{K}q &\text{ if }i=1, \\ \frac{n(p-q)}{K} &\text{ if }i=2, \dots, K, \\ 0 &\text{ if }i=K+1, \dots , n. \end{cases}$$

Therefore, since $\overline{d} = \alpha_1$, the eigenvalues of $\overline{\mP}$ are:

\begin{equation}
    \sigma_i = \begin{cases} 1 &\text{ if }i=1, \\ \frac{\alpha_i}{\alpha_1} &\text{ if } i=2, \dots, K \\ 0 &\text{ if }i = K+1, \dots, n. \end{cases}
\end{equation}

We can now write $\overline{\mC}$ as

\begin{align*}
    \overline{\mC} &= \frac{2}{n}\sum_{t=1}^T (L-t) \left(\mV \mathbf{\Sigma}^t \mV^{\top}\right) \\
    &= \mV \left( \frac{2}{n}\sum_{t=1}^T (L-t) \cdot \mathbf{\Sigma}^t \right) \mV^{\top} \\
    &= \mV \mathbf{\Lambda} \mV^{\top}
\end{align*}

where $\mathbf{\Lambda} = diag(\lambda_1, \lambda_2 , ..., \lambda_n)$ with $\lambda_i = \frac{2}{n}\sum_{t=1}^T (L-t)\sigma_i^t$ for all $i = 1, ..., n$. Therefore, the eigenvalues of $\overline{\mC}$ are 
\begin{equation}\label{eq:eigv_C_pq} \lambda_i = \begin{cases} \frac{2}{n}\left(TL - \frac{T(T+1)}{2}\right) &\text{ if } i=1, \\ \frac{2}{n}\sum_{t=1}^T (L-t) \left(\frac{\alpha_i}{\alpha_1}\right)^t &\text{ if }i=2, \dots, K, \\ 0 &\text{ if }i=K+1, \dots, n, \end{cases}\end{equation} where we used $\sum_{t=1}^T (L-t) = TL - \frac{T(T+1)}{2}$.

(\textit{Proof of 3.}) First, notice that we can express the eigenvalues of $\overline{\mC}$ in terms of its on-diagonal entries $a$ and off-diagonal entries $b$:

\begin{equation}\label{eq:eigv_C_ab}
    \lambda_i = \begin{cases} \frac{n}{K}a + \frac{n(K-1)}{K}b &\text{ if }i=1, \\ \frac{n(a-b)}{K} &\text{ if }i=2, \dots, K, \\ 0 &\text{ if }i=K+1, \dots , n.\end{cases}
\end{equation}

From equations (\ref{eq:eigv_C_pq}) and (\ref{eq:eigv_C_ab}), we know that for $i = 2, \dots , K$ we have 

\begin{align*}
    \frac{n}{K}(a-b) &= \frac{2}{n}\sum_{t=1}^T (L-t) \left(\frac{p-q}{p+(K-1)q}\right)^t.
\end{align*}

By multiplying both sides by $\frac{K}{n}$, the result immediately follows. 

(\textit{Proof of 4.}) We prove the result for $b$ first. From Equation (\ref{eq:eigv_C_ab}) we know that 

\begin{align}
    \lambda_1 - \lambda_2 &= \frac{n}{K}a + \frac{n(K-1)}Kq - \frac{n}{K}(a-b) = nb. \label{eq:l1_l2_ab}
\end{align}

Meanwhile, from Equation (\ref{eq:eigv_C_pq}) we know that

\begin{equation}\label{eq:lambda1_lambda2_pq}
    \lambda_1 - \lambda_2 = \frac{2}{n}\left(TL - \frac{T(T+1)}{2}\right) - \frac{2}{n}\sum_{t=1}^T (L-t)\left(\frac{p-q}{p+(K-1)q}\right)^t.
\end{equation}

Combining equations (\ref{eq:l1_l2_ab}) and (\ref{eq:lambda1_lambda2_pq}) and solving for $b$ gives

\begin{equation}\label{eq:b_pq}
b = \frac{2}{n^2} \left( \left(TL - \frac{T(T+1)}{2}\right) - \sum_{t=1}^T (L-t) \left(\frac{p-q}{p+(K-1)q}\right)^t \right).
\end{equation}

Now, we find $a$. From Equation (\ref{eq:b_pq}) we have

\begin{equation}\label{eq:a_minus_b_1}
    a - b = a - \left( \frac{2}{n^2} \left( \left(TL - \frac{T(T+1)}{2}\right) - \sum_{t=1}^T (L-t) \left(\frac{p-q}{p+(K-1)q}\right)^t \right) \right).
\end{equation}

From Property \ref{lem:C_property_3}, we have

\begin{equation}\label{eq:a_minus_b_2}
    a-b = \frac{2K}{n^2}\sum_{t=1}^T (L-t) \left(\frac{p-q}{p+(K-1)q}\right)^t.
\end{equation}

Combining equations (\ref{eq:a_minus_b_1}) and (\ref{eq:a_minus_b_2}) and solving for $a$ gives:

\begin{equation}
    a = \frac{2}{n^2}\left( \left(TL - \frac{T(T+1)}{2}\right) + (K-1)\sum_{t=1}^T (L-t) \left(\frac{p-q}{p+(K-1)q}\right)^t \right).
\end{equation}

This completes the proof.

\end{proof}

\subsection{Proof of Lemma \ref{lem:cooccurrence_conc}}\label{sec:cooccurence_conc}

First, we prove the following lemma:

\begin{lemma}\label{lem:transition_diff}
    Suppose $\mA$ is the adjacency matrix of a graph $G$ that is drawn from a symmetric SBM with $K$ equally sized communities and parameters $p,q$ of at least $n^{\rho-1}$ with $\rho \in (0,1)$. Let $\mP = \mD_{\mA}^{-1}\mA$ be the transition matrix of the graph. Similarly, let $\overline{\mP} = \mD_{\overline{\mA}}^{-1}\overline{\mA}$ be the transition matrix of the expected graph and $\overline{d} = \frac{n}{K}p + \frac{n(K-1)}{K}q$ be the expected degree. Then for any integer $t \ge 1$, we have $$\left\lVert \mP^t - \overline{\mP}^{t} \right\rVert = O\left(\sqrt{\frac{\log{n}}{\overline{d}}}\right)$$ with high probability.
\end{lemma}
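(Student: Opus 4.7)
\bigskip
\noindent\emph{Proof plan for Lemma~\ref{lem:transition_diff}.} The plan is to reduce the bound on $\lVert \mP^t - \overline{\mP}^{t}\rVert$ to two standard concentration facts and then propagate them through a linear-algebraic decomposition. The two facts I will need are (i) spectral-norm concentration of the adjacency matrix, $\lVert \mA - \overline{\mA}\rVert = O(\sqrt{\overline{d}})$ with high probability, which follows from standard results for random graphs (e.g.\ Feige--Ofek or Bandeira--van Handel) because the assumption $p,q \ge n^{\rho-1}$ places us in the regime $\overline{d} = \Theta(np) \ge n^{\rho} \gg \log n$; and (ii) scalar degree concentration, $\max_i |d_i - \overline{d}| = O(\sqrt{\overline{d}\log n})$, which is a Chernoff bound plus a union bound over the $n$ vertices.

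Using these, I first handle the case $t=1$ by decomposing
\[
\mP - \overline{\mP} \;=\; \mD_\mA^{-1}(\mA - \overline{\mA}) \;+\; (\mD_\mA^{-1} - \mD_{\overline{\mA}}^{-1})\,\overline{\mA}.
\]
For the symmetric SBM, $\overline{\mA}$ has all row sums equal to $\overline{d}$, so $\mD_{\overline{\mA}} = \overline{d}\,\mI$ and $\overline{\mP} = \overline{\mA}/\overline{d}$ is symmetric with $\lVert \overline{\mP}\rVert = 1$. From (ii), $\min_i d_i = \Theta(\overline{d})$, hence $\lVert \mD_\mA^{-1}\rVert = O(1/\overline{d})$, so the first term has norm $O(1/\sqrt{\overline{d}})$ by (i). The second term is a product of a diagonal matrix with entries of size $|d_i^{-1} - \overline{d}^{-1}| = O(\sqrt{\log n}/\overline{d}^{3/2})$ and the matrix $\overline{\mA}$ of spectral norm $\overline{d}$, giving $O(\sqrt{\log n/\overline{d}})$. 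Combined, $\lVert \mP - \overline{\mP}\rVert = O(\sqrt{\log n/\overline{d}})$.

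For general $t$, I will use the telescoping identity
\[
\mP^t - \overline{\mP}^{t} \;=\; \sum_{k=0}^{t-1} \mP^{k}\,(\mP - \overline{\mP})\,\overline{\mP}^{t-1-k},
\]
together with the bounds $\lVert \overline{\mP}\rVert = 1$ and $\lVert \mP\rVert \le 1 + \lVert \mP - \overline{\mP}\rVert = 1 + o(1)$, which hold because $\overline{d}\to\infty$. Since in the paper we only ever apply this lemma for $t \le T = O(1)$, each of the $t$ summands contributes $O(\sqrt{\log n/\overline{d}})$ and the claim follows.

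The main obstacle, and the only genuinely nontrivial input, is the spectral concentration of the sparse random matrix $\mA - \overline{\mA}$; everything else is bookkeeping about how scalar and matrix errors propagate. Our assumption $p,q \ge n^{\rho-1}$ is precisely what places $\overline{d}$ comfortably above the $\log n$ threshold where the $O(\sqrt{\overline{d}})$ spectral bound is known to hold, which is what makes the proof go through cleanly. One subtlety to verify is that $\lVert \mP\rVert$ is well-controlled even though $\mP$ is not symmetric; this is precisely why I expand $\mP$ as $\overline{\mP}$ plus a small correction rather than trying to bound $\lVert \mP\rVert$ directly via row sums.
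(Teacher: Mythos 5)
Your proposal is correct and follows essentially the same route as the paper: the identical decomposition $\mP-\overline{\mP}=\mD_\mA^{-1}(\mA-\overline{\mA})+(\mD_\mA^{-1}-\mD_{\overline{\mA}}^{-1})\overline{\mA}$ with the same two concentration inputs, and your telescoping sum for general $t$ is just the unrolled form of the paper's two-term induction, both relying on $t\le T=O(1)$. If anything, you are slightly more careful than the paper in justifying $\lVert\mP\rVert\le 1+o(1)$ for the nonsymmetric $\mP$, a point the paper's proof uses implicitly.
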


\begin{proof}
We prove this by induction. First, we prove the base case. Let $t=1$. Then 
\begin{align*}
    \left\lVert \mP - \overline{\mP} \right\rVert &= \left\lVert \mD_{\mA}^{-1}\mA - \mD_{\overline{\mA}}^{-1}\overline{\mA} \right\rVert \\
    &= \left\lVert \mD_{\mA}^{-1} \left(\mA - \overline{\mA}\right) + \left(\mD_{\mA}^{-1} - \mD_{\overline{\mA}}^{-1}\right) \overline{\mA} \right\rVert  \\
    &\le \underbrace{\left\lVert \mD_{\mA}^{-1}\left(\mA - \overline{\mA}\right) \right\rVert}_{(i)} + \underbrace{\left\lVert \left(\mD_{\mA}^{-1} - \mD_{\overline{\mA}}^{-1} \right) \overline{\mA} \right\rVert}_{(ii)}.
\end{align*}

Given a graph $G$ drawn from a symmetric SBM with parameters $K,p,q$ with $p \ge n^{\rho-1}$ for $\rho \in (0,1)$, we have 
\begin{equation}\label{eq:bernstein_d} 
    \left\lvert d_i - \overline{d} \right\rvert = O\left( \sqrt{\overline{d} \log{n}} \right), \forall i \in [n],
\end{equation}
with probability $1 - n^{-5}$. This follows from standard Bernstein/Chernoff bounds \citep{vershynin2018high}. Also, with probably $1 - n^{-5}$, we have 
\begin{equation}
    \left\lVert \mA - \overline{\mA} \right\rVert = O\left(\sqrt{\overline{d}}\right),
\end{equation}
which follows from standard spectral bounds \citep{vu2005spectralnorm}. Therefore, we can bound (i) by

\begin{align*}
    \left\lVert \mD_{\mA}^{-1} \left( \mA - \overline{\mA} \right) \right\rVert &\le \frac{2}{\overline{d}} \cdot O\left(\sqrt{\overline{d}}\right) = O\left(\frac{1}{\sqrt{\overline{d}}}\right).
\end{align*}

Similarly, we can bound (ii) by

\begin{align*}
    \left\lVert \left(\mD_{\mA}^{-1} - \mD_{\overline{\mA}}^{-1} \right) \overline{\mA} \right\rVert &\le \max_{i} \left\lvert \frac{d_i - \overline{d}}{d_i \overline{d}} \right\rvert \cdot \left\lVert \overline{\mA} \right\rVert \\ 
    &= O\left(\overline{d}^{-3/2} \sqrt{\log{n}}\right) \cdot \overline{d} \\
    &= O\left(\sqrt{\frac{\log{n}}{\overline{d}}}\right).
\end{align*}

Combining the bounds of (i) and (ii), we have $$\left\lVert \mP - \overline{\mP} \right\rVert = O\left(\sqrt{\frac{\log{n}}{\overline{d}}}\right).$$

Now, assume that the statement holds for $t-1$, i.e.,

\begin{equation}
    \left\lVert \mP^{t-1} - \overline{\mP}^{t-1} \right\rVert = O\left(\sqrt{\frac{\log{n}}{\overline{d}}} \right).
\end{equation}

Therefore, by induction we have

\begin{align*}
    \left\lVert \mP^t - \overline{\mP}^t \right\rVert &\le \left\lVert \mP^{t-1}\left(\mP - \overline{\mP} \right) \right\rVert + \left\lVert \left(\mP^{t-1} - \overline{\mP}^{t-1} \right) \overline{\mP} \right\rVert \\
    &= O\left(\sqrt{\frac{\log{n}}{\overline{d}}} \right).
\end{align*}

This concludes the proof.

\end{proof}

Now, we precede to prove Lemma \ref{lem:cooccurrence_conc}. Using Lemma~\ref{lem:co-occur-ij}, we can express the co-occurrence matrix $\mC$ as

\[\mC = 2\sum_{t=1}^{T}\frac{(L-t)}{2|E|} \mD_{\mA} \mP^t, \]

and a corresponding expected co-occurrence matrix $\overline{\mC}$ as

\[\overline{\mC} = 2\sum_{t=1}^{T}\frac{(L-t)}{n\overline{d}} \mD_{\overline{\mA}} \overline{\mP}^t , \]

where $\mP = \mD_{\mA}^{-1} \mA$ and $\overline{\mP} = \mD_{\overline{\mA}}^{-1}\overline{\mA}$. From Property \ref{lem:C_property_4} of Lemma \ref{lem:cooccurrence_prop}, we know that $\overline{\mC}$ has a block structure with entries $a$ and $b$ equal to

\begin{align*} 
a &= \frac{2}{n^2}\left( \left(TL - \frac{T(T+1)}{2}\right) + (K-1)\sum_{t=1}^T (L-t) \left(\frac{p-q}{p+(K-1)q)} \right)^t \right), \\
b &= \frac{2}{n^2} \left( \left(TL - \frac{T(T+1)}{2}\right) - \sum_{t=1}^T (L-t) \left(\frac{p-q}{p+(K-1)q}\right)^t \right).\end{align*}

Now, we can express the difference $\mC - \overline{\mC}$ as

\begin{equation}\label{eq:C_diff}
    \mC - \overline{\mC} = 2 \sum_{t=1}^{T} \left( L-t \right) \left(\frac{1}{2|E|} \mD_{\mA} \mP^{t} - \frac{1}{n\overline{d}} \mD_{\overline{\mA}} \overline{\mP}^{t} \right)
\end{equation}

We proceed by showing that each term in the summation has a spectral norm $\le \frac{\sqrt{\log{n}}}{n\sqrt{\overline{d}}}$. Notice that we can write each term in (\ref{eq:C_diff}) as 

\begin{align}
    \frac{1}{2|E|} \mD_{\mA} \mP^{t} - \frac{1}{n\overline{d}} \mD_{\overline{\mA}} \overline{\mP}^{t} &= \frac{1}{2|E|} \mD_{\mA} \mP^t - \frac{1}{n\overline{d}} \mD_{\mA} \mP^t + \frac{1}{n\overline{d}} \mD_{\mA} \mP^t - \frac{1}{n\overline{d}} \mD_{\overline{\mA}} \overline{\mP}^t \nonumber\\
    &= \underbrace{\left( \frac{n\overline{d} - 2|E|}{2|E|n\overline{d}} \right) \mD_{\mA} \mP^t}_{(i)} + \underbrace{\frac{1}{n\overline{d}} \left( \mD_{\mA} \mP^t - \mD_{\overline{\mA}} \overline{\mP}^t \right)}_{(ii)}.\label{eq:two_parts}
\end{align}

We bound (i) and (ii) in (\ref{eq:two_parts}) separately. Given a graph $G$ drawn from a symmetric SBM with parameters $K,p,q$ with $p \ge n^{\rho-1}$ for $\rho \in (0,1)$, we have 
\begin{equation}\label{eq:bernstein_d} 
\left\lvert d_i - \overline{d} \right\rvert = O\left( \sqrt{\overline{d} \log{n}} \right), \forall i \in [n],
\end{equation}
\begin{equation}\label{eq:bernstein_E} 
\left\lvert |E| - \frac{n\overline{d}}{2} \right\rvert = O\left( \sqrt{n\overline{d}\log{n}} \right)
\end{equation} 
with probability $1 - n^{-5}$. These follow from standard Bernstein/Chernoff bounds \citep{vershynin2018high}. Therefore, with high probability, the norm of the first term (i) in (\ref{eq:two_parts}) can be bounded: 

\begin{align*}
    \left\lVert \left( \frac{n\overline{d} - 2|E|}{2|E|n\overline{d}} \right) \mD_{\mA} \mP^t \right\rVert &\le \left\lvert \frac{n\overline{d} - 2|E|}{2|E|n\overline{d}}\right\rvert \cdot \max_i d_i \cdot \left\lVert \mP^t \right\rVert \\
    &\le O\left( \left(n\overline{d}\right)^{-3/2} \sqrt{\log{n}} \right) \cdot \left( \overline{d} + O\left( \sqrt{\overline{d} \log{n}} \right) \right) \cdot \left( 1 \right) \\
&= O\left( \frac{1}{n \sqrt{\overline{d}}} \right).
\end{align*}

Next, we bound the second term (ii) in (\ref{eq:two_parts}). We can write (ii) as

\begin{equation}\label{eq:two_parts_part_two}
    \frac{1}{n\overline{d}} \left( \mD_{\mA} \mP^t - \mD_{\overline{mA}} \overline{\mP}^t \right) = \underbrace{\frac{1}{n\overline{d}} \left(\mD_{\mA} - \mD_{\overline{\mA}} \right) \mP^t}_{(iii)} + \underbrace{\frac{1}{n\overline{d}} \mD_{\overline{\mA}}\left( \mP^t - \overline{\mP}^t \right)}_{(iv)}.
\end{equation}

Using (\ref{eq:bernstein_d}), the norm of (iii) in (\ref{eq:two_parts_part_two}) can be bounded with high probability:

\begin{align*}
    \left\lVert \frac{1}{n\overline{d}} \left( \mD_{\mA} - \mD_{\overline{\mA}} \right) \mP^t \right\rVert &\le \frac{1}{n\overline{d}} \left\lVert \mD_{\mA} - \mD_{\overline{\mA}} \right\rVert \cdot \left\lVert \mP^t \right\rVert \\
    &= \frac{1}{n\overline{d}} \cdot O\left( \sqrt{\overline{d}\log{n}} \right) \\
    &= O\left(\frac{\sqrt{\log{n}}}{n\sqrt{\overline{d}}}\right).
\end{align*}

Now, using Lemma \ref{lem:transition_diff} and (\ref{eq:bernstein_d}), we bound the norm of (iv) in (\ref{eq:two_parts_part_two}):

\begin{align*}
    \left\lVert \frac{1}{n\overline{d}} \mD_{\overline{\mA}}\left(\mP^t - \overline{\mP}^t \right) \right\rVert &\le \frac{1}{n\overline{d}} \left\lVert \mD_{\overline{\mA}} \right\rVert \cdot \left\lVert \mP^t - \overline{\mP}^t \right\rVert \\
    &= O\left(\frac{\sqrt{\log{n}}}{n\sqrt{\overline{d}}}\right).
\end{align*}

Combining the bounds for (i), (iii), and (iv), gives 

\begin{align*}
    \left\lVert \mC - \overline{\mC} \right\rVert &\le 2\sum_{t=1}^T(L-t) \left\lVert \frac{1}{2|E|}\mD_{\mA} \mP^t - \frac{1}{n\overline{d}}\mD_{\overline{\mA}} \overline{\mP}^t \right\rVert \\
    &=2\left(TL - \frac{T(T+1)}{2}\right) \cdot O\left( \frac{\sqrt{\log{n}}}{n\sqrt{\overline{d}}}\right) \\
    &= \lVert \overline{\mC} \rVert \cdot O\left( \sqrt{\frac{\log{n}}{n^{\rho}}} \right).
\end{align*}

This concludes the proof. 

\section{Proofs From Section~\ref{sec:linear_approx_analysis}}

\subsection{Proof of Proposition~\ref{prop:1}}\label{proof:prop1}

Suppose that without loss of generality that $y_1 \le y_2 \le ... \le y_n$. We have

\begin{align*}
    \lVert \mQ - \frac{1}{n}\mJ \rVert_F^2 &= \sum_{i=1}^n \sum_{j=1}^n \left(\frac{e^{x_i y_j}}{\sum_{k=1}^n e^{x_i y_k}} - \frac{1}{n} \right)^2 \\
    &= \sum_{x_i \le 0} \sum_{j=1}^n \left(\frac{e^{x_i (y_j - y_n)}}{\sum_{k=1}^n e^{x_i (y_k - y_n)}} - \frac{1}{n} \right)^2 + \sum_{x_i > 0} \sum_{j=1}^n \left(\frac{e^{x_i (y_j - y_1)}}{\sum_{k=1}^n e^{x_i (y_k - y_1)}} - \frac{1}{n} \right)^2 \\
    &\le \sum_{x_i \le 0} \frac{n}{4} \left( \frac{e^{x_i (y_1 - y_n)} - 1}{\sum_{k=1}^n e^{x_i (y_k - y_n)} } \right)^2 + \sum_{x_i > 0} \frac{n}{4} \left( \frac{e^{x_i (y_n - y_1)} - 1}{\sum_{k=1}^n e^{x_i (y_k - y_1)} } \right)^2 \\
    &\le \frac{1}{4} \left(e^{2\epsilon^2} - 1 \right)^2 \\
    &\le \epsilon^4.
\end{align*}

\noindent The first inequality is an application of Popoviciu's inequality (Theorem \ref{thm:popoviciu}) and in the last step we use the fact that $e^x - 1 \le xe^x$ for $x > 0$ and $xe^x \le 2x$ when $x < \log{2}$. Taking the square root gives the result.

\subsection{Proof of Lemma~\ref{lem:error-norm}}\label{proof:error-norm}

Note that for a block matrix $\mY = \begin{bmatrix} \mathbf{0} & \mX \\ \mX^{\top} & \mathbf{0} \end{bmatrix}$ we have $\lVert \mY \rVert^2_F = 2\lVert \mX \rVert^2_F$. This means that

\begin{align*}
    \lVert \mE^{(t)} \rVert^2_F &= 2\lVert \mD_{\mC}\left(\mQ^{(t)} - \frac{\mJ}{n} \right) \rVert^2_F \\
    &\le 2 \lVert \mD_{\mC} \rVert^2 \lVert \mQ^{(t)} - \frac{1}{n}{\mJ} \rVert^2_F \\
    &\le 2 \lVert \mD_{\mC} \rVert^2 \cdot \epsilon^4,
\end{align*}

where we used proposition~\ref{prop:1} in the last inequality. 

Now we need to bound $\lVert \mD_{\mC} \rVert$. We have

\begin{align*}
\lVert \mD_{\mC} \rVert = \lVert \mC \rVert_{\infty} \le \lVert \mC \rVert \le \frac{n}{K}(a + (K-1)b) + \lVert \mR \rVert \le 1 + \frac{c}{\sqrt{n^{\rho+2}}} \le 2,
\end{align*}

where we have used the fact that $\frac{n}{K}(a + (K-1)b) = O\left(\frac{1}{n}\right)$ (see Lemma \ref{lem:cooccurrence_prop}) and $\lVert \overline{\mC} - \mC \rVert = \lVert \mR \rVert \le \frac{c}{\sqrt{n^{\rho+2}}}$. Therefore $$\lVert \mE^{(t)} \rVert_F \le \sqrt{8}\epsilon^2 < 4 \epsilon^2.$$ This concludes the proof.

\subsection{Proof of Lemma~\ref{lem:cL-properties}}\label{sec:cL-properties}

First, note that for a real symmetric matrix $\mX$ with eigenvalues $\sigma_1, ..., \sigma_n$, the matrix $\mY = \begin{bmatrix} \mathbf{0} & \mX \\ \mX & \mathbf{0} \end{bmatrix}$ has eigenvalues $\alpha_i = \pm \sigma_i$. 

Let $$\overline{\mL} = \begin{bmatrix} \mI & -\eta \overline{\mM} \\ -\eta\overline{\mM} & \mI \end{bmatrix}.$$ Recall from section \ref{sec:structure_algo}, that $\overline{\mM}$ has $(K-1)$ nonzero eigenvalues of $\frac{n}{K}(b-a)$ and $(n-K+1)$ eigenvalues of $0$. This implies that $\overline{\mL}$ has eigenvalues $$\overline{\alpha}_i = \begin{cases} 1 + \eta \frac{n}{K}(a-b) &\text{ if } i=1,\dots,K-1, \\1 &\text{ if } i=K, \dots, 2n - K, \\1 - \eta \frac{n}{K}(a-b) &\text{ if } i = 2n - K + 1, \dots, 2n. \end{cases}$$

For an eigenvalue $\alpha_i$ of $\mL$, Weyl's theorem (Theorem \ref{thm:weyl}) implies that for $i = 1, \dots , K-1$, we have $|\alpha_i - \overline{\alpha}_i| \le \lVert \mR \rVert$, where $\mR = \mL - \overline{\mL}$. This means that for $i = 1, ..., K-1$, we have $$\alpha_i \ge 1 + \eta \frac{n}{K}(a-b) - \eta\frac{c}{\sqrt{n^{\rho+2}}} \ge 1 + \eta \frac{n(a-b)}{2K} = 1 + \eta \gamma.$$

For $i = K, ..., 2n - K$, we have $$\alpha_i \le 1 + \eta\frac{c}{\sqrt{n^{\rho+2}}},$$ and for $i = 2n-K+1 , .., 2n$, we have $$\alpha_i \le 1 - \eta\frac{n}{K}(a-b) + \eta\frac{c}{\sqrt{n^{\rho+2}}} \le 1 + \eta\frac{c}{\sqrt{n^{\rho+2}}}.$$

Lastly, it is now easy to see that the largest eigenvalues have an upper bound of $\alpha_i \le 1 + \eta\frac{n}{K}(a-b) + \eta\frac{c}{\sqrt{n^{\rho+2}}} \le 1 + 2\eta\frac{n}{K}(a-b) = 1 + 4\eta\gamma$. Similarly, the smallest eigenvalues have a lower bound of $\alpha_i \ge 1 - \eta\frac{n}{K}(a-b) - \eta\frac{c}{\sqrt{n^{\rho+2}}} \ge 1 - 2\eta\frac{n}{K}(a-b) = 1 - 4\eta\gamma$. 

This concludes the proof.

\section{Proofs From section \ref{sec:convergence_analysis}}
\subsection{Proof of Theorem \ref{thm:cluster-inside}}\label{proof:cluster-inside}

From the properties of $\overline{\mC}$ described in Section \ref{analysis}, the top $K$ eigenvectors of $\overline{\mC}$ lie in the space spanned by the vectors $\vone_{\sV_1}, \vone_{\sV_2}, \dots, \vone_{\sV_K}$. In the span of these vectors, suppose we wish to find a vector $\vx'$ that minimizes $\norm{\vx - \vx'}^2$. Writing a general vector in the span as $\vx' = \sum_i \alpha_i \vone_{\sV_i}$, we see that in order to minimize $\norm{\vx - \vx'}^2$, we must set $\alpha_i = \mu_{\sV_i}$. Thus, if $\Gamma$ is the projection matrix onto the span of $\{ \vone_{\sV_i}\}_{i=1}^K$, we have 
\[ \norm{\vx - \pmb{\mu}} = \norm{(I - \Gamma) \vx}.  \]

Let us now compare $(I - \Gamma)\vx$ and $(I - \overline{\Pi}) \vw\su{T}$. The two main differences are the following: first, $\overline{\Pi}$ is a $2n \times 2n$ projection matrix (as opposed to $\Gamma$, which is $n \times n$). Second, $\overline{\Pi}$ is a projection onto a $(K-1)$ dimensional subspace (as opposed to $\Gamma$, which projects to $K$ dimensions). The structure of the eigenvectors of $\overline{\mL}$ (see the proof of Lemma~\ref{lem:cL-properties}) implies that the projection $(I -  \overline{\Pi})$ is equivalent to applying an appropriate projection to the first $n$ and the second $n$ coordinates separately. Thus, the term $(I - \overline{\Pi}) \vw\su{t_f}$ includes the projection error for both $\vx$ and $\vy$. Second, since the error in projecting to a $K$ dimensional space is only smaller than the error in projecting to a $(K-1)$ dimensional subspace of it, we get:
\[ \norm{ (I - \Gamma)\vx} \le \norm{(I - \overline{\Pi}) \vw\su{t_f}}.  \]

Next, we try to use the conclusion of Lemma~\ref{lem:error-w-z}. For this, we need to relate $\Pi$ and $\overline{\Pi}$. But this turns out to be easy in our case. Recall that $\Pi$ and $\overline{\Pi}$ are, respectively, the projections onto the span of the top $(K-1)$ eigenvectors of $\mM$ and $\overline{\mM}$ respectively. Since the gap between the $(K-1)$ and $K$th largest eigenvalues for $\mM$ (and also $\overline{\mM}$) is $\ge \frac{n(a-b)}{2K}$ (which is $\Omega\left(\frac{1}{n}\right)$), we can use the classic Davis-Kahan Sin-Theta theorem~\citep{stewart90}, applied to the spectral norm, to obtain:
\[ \norm{\Pi - \overline{\Pi}} \le \frac{2K}{n(a-b)} \norm{\mR},  \]
where $\norm{\mR} \le \frac{c}{\sqrt{n^{\rho + 2}}}$ as we saw before. This implies that
\[ \norm{ (I - \Gamma)\vx} \le \norm{(I - \Pi) \vw\su{t_f}} + O(\frac{1}{\sqrt{n^{\rho}}}) \norm{\vw\su{t_f}}. \]
Combining this with Lemma~\ref{lem:error-w-z} and noting that $\frac{1}{\Delta}$ dominates $\frac{1}{\sqrt{n^{\rho}}}$, the theorem follows.

\subsection{Proof of Theorem~\ref{thm:main-separation}}\label{proof:thm:main-separation}

Fix some $i,j \in [K]$ with $i\ne j$. Consider the $2n$ dimensional vector $\vu$ obtained by taking $\frac{\vone_{\sV_i}}{\sqrt{|\sV_i|}} - \frac{\vone_{\sV_j}}{\sqrt{|\sV_j|}}$ and appending $0$ for the remaining $n$ coordinates. Since the partitions $\sV_i$ are all of size $n/K$, this is simply the vector $\sqrt{\frac{K}{n}} (\vone_{\sV_i} - \vone_{\sV_j})$ with $0$s appended. Note that by construction, $\norm{\vu}^2 = 2$.

Now, since the initial vector $\vw\su{0}$ was chosen uniformly at random, its projection $\vz\su{0}$ is a random vector in the space spanned by the top $(K-1)$ eigenvectors of $\mL$. Since $u$ is a vector also in this span, and since it has constant length, we expect $u$ to have an inner product of roughly $\norm{\vz\su{0}}/\sqrt{(K-1)} $ with $\vz\su{0}$. In fact, since the inner product is distributed as a Gaussian, we have that
\[ \Pr [ |\iprod{\vu, \vz\su{0}}| < \frac{1}{10 K^2 \sqrt{K}} \norm{\vz\su{0}} ] < \frac{1}{10 K^2}. \]

Now, from the discussion preceding the theorem, we have that with probability $1 - \frac{1}{10 K^2}$,
\[  |\iprod{\vu, \vz\su{t_f}}| \ge \frac{1}{20 K^2 \sqrt{K}} \norm{\vz\su{t_f}}. \]
By definition, note that $| \mu_{\sV_i} - \mu_{\sV_j} | = \sqrt{\frac{K}{n}} |\iprod{\vu, \vz\su{t_f}}|$. This implies that with probability $1- \frac{1}{10K^2}$, we have (for fixed $i,j$),
\[ | \mu_{\sV_i} - \mu_{\sV_j} | \ge \frac{1}{20 K^2 \sqrt{n}} \norm{\vz\su{t_f}}.  \]
Taking a union bound over all $i,j$ and using the value of $\norm{\vz\su{t_f}}$, the theorem follows.

\section{Auxiliary Lemmas}

\begin{theorem}[Weyl's Theorem]\label{thm:weyl}
Let $\overline{\mA}$ and $\mE$ be $n \times n$ symmetric matrices. Let $\overline{\lambda}_1 \ge ... \ge \overline{\lambda}_n$ be the eigenvalues of $\overline{\mA}$ and $\lambda_1 \ge ... \ge \lambda_n$ be the eigenvalues of $\mA = \overline{\mA} + \mE$. Then $|\lambda_1 - \overline{\lambda}_1 | \le \lVert \mE \rVert$.
\end{theorem}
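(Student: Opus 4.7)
The plan is to prove this version of Weyl's inequality using the variational (Rayleigh quotient) characterization of the top eigenvalue of a symmetric matrix. Since $\mA$ and $\overline{\mA}$ are both symmetric, the spectral theorem guarantees that
\[ \lambda_1 = \max_{\|v\|=1} v^\top \mA v \quad\text{and}\quad \overline{\lambda}_1 = \max_{\|v\|=1} v^\top \overline{\mA} v, \]
with both maxima attained by unit eigenvectors corresponding to the largest eigenvalue. The whole argument reduces to establishing the two one-sided inequalities $\lambda_1 - \overline{\lambda}_1 \le \|\mE\|$ and $\overline{\lambda}_1 - \lambda_1 \le \|\mE\|$ and then combining them.

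For the first direction, I would pick a unit vector $u$ that attains the maximum for $\mA$, and decompose
\[ \lambda_1 \;=\; u^\top \mA u \;=\; u^\top \overline{\mA} u + u^\top \mE u. \]
The first summand is at most $\overline{\lambda}_1$ by definition of the max over unit vectors, and the second satisfies $|u^\top \mE u| \le \|\mE u\| \le \|\mE\|$ by Cauchy--Schwarz followed by the definition of the spectral norm. Combining gives $\lambda_1 \le \overline{\lambda}_1 + \|\mE\|$.

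The other direction follows by a symmetric argument: write $\overline{\mA} = \mA + (-\mE)$, apply the same decomposition to a unit eigenvector of $\overline{\mA}$ attaining $\overline{\lambda}_1$, and use $\|-\mE\| = \|\mE\|$. Putting both bounds together yields $|\lambda_1 - \overline{\lambda}_1| \le \|\mE\|$.

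There is no essential technical obstacle: the argument is elementary and classical. The only subtlety worth noting is that symmetry of both matrices is what makes the Rayleigh quotient maxima attained (not just suprema), which is what lets the argument go through cleanly in both directions. If one needed the full Weyl inequality $|\lambda_i - \overline{\lambda}_i| \le \|\mE\|$ for every index $i$ (not used in the paper's statement), the same idea would work by replacing the Rayleigh characterization with the Courant--Fischer min--max formula $\lambda_i = \max_{\dim S = i} \min_{v \in S,\ \|v\|=1} v^\top \mA v$, but for the top eigenvalue the simpler argument above suffices.
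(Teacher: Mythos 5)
Your proof is correct. The paper states this version of Weyl's theorem as a classical auxiliary fact and does not supply a proof at all, so there is nothing to compare against; your Rayleigh-quotient argument --- bounding $\lambda_1 = u^\top \overline{\mA} u + u^\top \mE u \le \overline{\lambda}_1 + \lVert \mE \rVert$ at a maximizing unit vector $u$ and symmetrizing via $\overline{\mA} = \mA + (-\mE)$ --- is the standard derivation and is complete. Your closing remark about Courant--Fischer for general indices is accurate but, as you note, not needed for the top eigenvalue.
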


\begin{theorem}[Popoviciu's Inequality]\label{thm:popoviciu}
Let $M$ and $m$ be the upper and lower bounds of the entries of a probability vector $\vp = (p_1, ..., p_n).$ Then 

\begin{equation}
    \frac{1}{n} \sum_{i=1}^n (p_i - \frac{1}{n})^2 \le \frac{1}{4}(M-m)^2.
\end{equation}
\end{theorem}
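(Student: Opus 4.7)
The plan is to recognize that the left-hand side is exactly the variance of the empirical distribution that assigns uniform weight $1/n$ to each value $p_1,\dots,p_n$, and then apply the standard midpoint trick used to prove the classical Popoviciu variance inequality.

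First I would observe that because $\vp$ is a probability vector, $\sum_{i=1}^{n} p_i = 1$, so the arithmetic mean of $p_1,\dots,p_n$ equals $1/n$. Consequently the left-hand side $\frac{1}{n}\sum_i (p_i - 1/n)^2$ is the variance of the uniform distribution on the $p_i$. The key fact I would use next is that the mean minimizes the mean-squared deviation, i.e.\ for any constant $c \in \mathbb{R}$,
\[
\frac{1}{n}\sum_{i=1}^{n}\bigl(p_i - \tfrac{1}{n}\bigr)^2 \;\le\; \frac{1}{n}\sum_{i=1}^{n}(p_i - c)^2,
\]
which follows by expanding the square and using that the cross term vanishes at the mean.

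The plan then is to choose $c$ to be the midpoint of the interval $[m,M]$, namely $c = (M+m)/2$. Since each $p_i \in [m,M]$, we get the pointwise bound $|p_i - c| \le (M-m)/2$, hence $(p_i - c)^2 \le (M-m)^2/4$ for every $i$. Averaging over $i$ gives $\frac{1}{n}\sum_i (p_i - c)^2 \le (M-m)^2/4$, and combining with the variance inequality above yields the desired bound.

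There is no real technical obstacle here; the only subtlety worth stating cleanly is the two-line verification that the mean of the $p_i$'s is $1/n$ (from $\sum p_i = 1$) and the standard identity that the variance is minimized at the mean. Everything else is a one-step pointwise bound followed by averaging, so the entire argument should fit in a short paragraph.
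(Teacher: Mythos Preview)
Your argument is correct, but it proceeds along a genuinely different line from the paper's. The paper exploits the pointwise nonnegativity of $(M-p_i)(p_i-m)$, expands this to obtain $\frac{1}{n}\sum_i p_i^2 \le -mM + (M+m)/n$, rewrites the variance as $(M - \tfrac{1}{n})(\tfrac{1}{n} - m)$, and finishes with the AM--GM inequality. Your approach instead uses the variational characterization of the mean (variance $\le$ mean-squared deviation about any $c$) together with the midpoint choice $c=(M+m)/2$ and the pointwise bound $|p_i - c|\le (M-m)/2$. Both are standard proofs of the classical Popoviciu bound; your route is slightly more conceptual and avoids the algebraic manipulation and AM--GM step, while the paper's route makes the role of the extremes $M,m$ explicit through the product $(M-p_i)(p_i-m)$. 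Either is entirely acceptable here.
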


\begin{proof}
    First, notice that

    \begin{align*}
        0 &\le \frac{1}{n}\sum_{i=1}^n (M - p_i)(p_i - m) \\
        &= \frac{1}{n} \sum_{i=1}^n (Mp_i - mM - p_i^2 + mp_i) \\
        &= -mM + \frac{M+m}{n} - \frac{1}{n}\sum_{i=1}^n p_i^2.
    \end{align*}

    Therefore,

    \begin{align*}
        \frac{1}{n}\sum_{i=1}^n (p_i - \frac{1}{n})^2 &= \frac{1}{n}\sum_{i=1}^n p_i^2 - \frac{1}{n} \\
        &\le -mM + \frac{M + m}{n} - \frac{1}{n^2} \\
        &= (M - \frac{1}{n})(\frac{1}{n} - m) \\
        &\le \left(\frac{M - \frac{1}{n} + \frac{1}{n} - m}{2}\right)^2 \\
        &= \frac{1}{4}(M - m)^2,
    \end{align*}

    where the last inequality makes use of the AM-GM inequality. 
    
\end{proof}

\end{document}